\newcolumntype{L}{>{\centering\arraybackslash}m{0.1\linewidth}}
\newtheorem{theorem}{Theorem}[section]
\newtheorem{lemma}[theorem]{Lemma}
\DeclareMathOperator*{\argmin}{argmin}
\newcommand{\Exp}{\textrm{Exp}}
\newcommand{\Log}{\textrm{Log}}
\newcommand{\bfv}{\boldsymbol{v}}
\newcommand{\bfw}{\boldsymbol{w}}
\newcommand{\bfx}{\boldsymbol{x}}
\newcommand{\bfy}{\boldsymbol{y}}
\newcommand{\bfz}{\boldsymbol{z}}
\newcommand{\bfmu}{\boldsymbol{\mu}}
\newcommand{\bfM}{\boldsymbol{M}}
\newcommand{\bfQ}{\boldsymbol{Q}}
\newcommand{\bfI}{\boldsymbol{I}}
\newcommand{\bfU}{\boldsymbol{U}}
\newcommand{\bfW}{\boldsymbol{W}}
\newcommand{\bfSigma}{\boldsymbol{\Sigma}}
\newcommand{\bbN}{\mathbb{N}}
\newcommand{\bbP}{\mathbb{P}}
\newcommand{\bbQ}{\mathbb{Q}}
\newcommand{\bbR}{\mathbb{R}}
\newcommand{\bbS}{\mathbb{S}}
\newcommand{\bbRpos}{\mathbb{R}^+}
\newcommand{\calI}{\mathcal{I}}
\newcommand{\calM}{\mathcal{M}}
\newcommand{\calN}{\mathcal{N}}
\newcommand{\calW}{\mathcal{W}}
\newcommand{\WL}{\mathcal{WL}}
\newcommand{\Frechet}{Fr\'{e}chet }
\begin{document}
	
\title{Learning over von Mises–Fisher Distributions via a Wasserstein-like Geometry}
\author[1,2]{Kisung You}
\author[2]{Dennis Shung}
\author[2]{Mauro Giuffr\`{e}}
\affil[1]{Department of Mathematics, Baruch College}
\affil[2]{Department of Internal Medicine, Yale University School of Medicine}
\date{}

\maketitle
\begin{abstract}
We introduce a novel, geometry-aware distance metric for the family of von Mises–Fisher (vMF) distributions, which are fundamental models for directional data on the unit hypersphere. Although the vMF distribution is widely employed in a variety of probabilistic learning tasks involving spherical data, principled tools for comparing vMF distributions remain limited, primarily due to the intractability of normalization constants and the absence of suitable geometric metrics. Motivated by the theory of optimal transport, we propose a Wasserstein-like distance that decomposes the discrepancy between two vMF distributions into two interpretable components: a geodesic term capturing the angular separation between mean directions, and a variance-like term quantifying differences in concentration parameters. The derivation leverages a Gaussian approximation in the high-concentration regime to yield a tractable, closed-form expression that respects the intrinsic spherical geometry. We show that the proposed distance exhibits desirable theoretical properties and induces a latent geometric structure on the space of non-degenerate vMF distributions. As a primary application, we develop the efficient algorithms for vMF mixture reduction, enabling structure-preserving compression of mixture models in high-dimensional settings. Empirical results on synthetic datasets and real-world high-dimensional embeddings, including biomedical sentence representations and deep visual features, demonstrate the effectiveness of the proposed geometry in distinguishing distributions and supporting interpretable inference. This work expands the statistical toolbox for directional data analysis by introducing a tractable, transport-inspired distance tailored to the geometry of the hypersphere.
\end{abstract}

%Keywords: Directional Statistics, von Mises-Fisher Distribution, Optimal Transport, Information Geometry

%%%%%%%%%%%%%%%%%%%%%%%%%%%
\section{Introduction}

One of the enduring trajectories in the evolution of statistical learning is the expansion of its scope to encompass increasingly diverse types of data and the complex structural constraints they may satisfy. Classical statistical theory, grounded in standard Euclidean geometry, often falls short when data naturally resides on curved or constrained domains. A paradigmatic example of this shift is directional statistics \citep{mardia_2000_DirectionalStatistics, ley_2017_ModernDirectionalStatistics}, a field concerned with the analysis of data that is directional in nature—such as orientations, rotations, axes, and subspaces. In this regime, observations no longer adhere to the usual rules of vector spaces, necessitating alternative treatments that account for the underlying geometry.

Among the various forms of non-Euclidean data, the analysis of directions has received particular attention, dating back to early 20th-century studies in geology, meteorology, and biology, well before the formalization of directional statistics as a distinct discipline. In this context, a direction refers to an equivalence class of vectors under positive scalar multiplication. Concretely, this yields a canonical representation of directional data as points on the unit hypersphere, a compact Riemannian manifold with constant positive curvature. The shift from flat to spherical geometry introduces both rich challenges and new opportunities: classical notions such as distances, averages, and probability densities must be redefined to respect the manifold structure. In this paper, we focus on the von Mises–Fisher distribution as a probabilistic model for such data and address the lack of principled tools for measuring distances between these distributions in a way that honors  their intrinsic geometry.

The von Mises–Fisher (vMF) distribution is a fundamental probabilistic model for data constrained to lie on the unit hypersphere. Originally introduced by \citet{fisher_1953_DispersionSphere} and later studied under the name Langevin distribution in both physical and statistical contexts \citep{watson_1984_TheoryConcentratedLangevin, watamori_1996_StatisticalInferenceLangevin}, the vMF distribution serves as a natural analog of the isotropic Gaussian distribution for spherical domains. It is parameterized by a unit-norm mean direction and a concentration parameter that governs the tightness of the distribution around this mean. Despite its conceptual simplicity and geometric appeal, the vMF distribution has remained underutilized in the broader context of statistical inference, particularly in tasks requiring comparison between models.

Prior work on vMF distributions has largely focused on modeling and clustering applications, typically relying on likelihood-based criteria or heuristic similarity measures. A few notable exceptions exist. The Kullback–Leibler (KL) divergence \citep{kullback_1951_InformationSufficiency}, for instance, has been applied in variational inference, albeit only in restricted settings due to the intractability of the normalizing constant \citep{gopal_2014_MisesfisherClusteringModels, diethe_2015_NoteKullbackLeiblerDivergence}. More generally, statistical distances that are both computationally tractable and geometrically meaningful remain elusive for vMF distributions, particularly for tasks such as model reduction, barycenter computation, or hierarchical clustering. Recent work by \citet{kitagawa_2022_MisesFisherDistributionsTheir} introduced analytical expressions for certain dissimilarities within the family of $f$-divergences \citep{csiszar_1963_InformationstheoretischeUngleichungUnd}, but these expressions often involve evaluating complex integrals and do not easily lend themselves to learning tasks involving large collections of vMF distributions.

\begin{figure}[ht]
	\centering
	\begin{subfigure}[b]{0.32\textwidth}
		\centering
		\includegraphics[width=\textwidth]{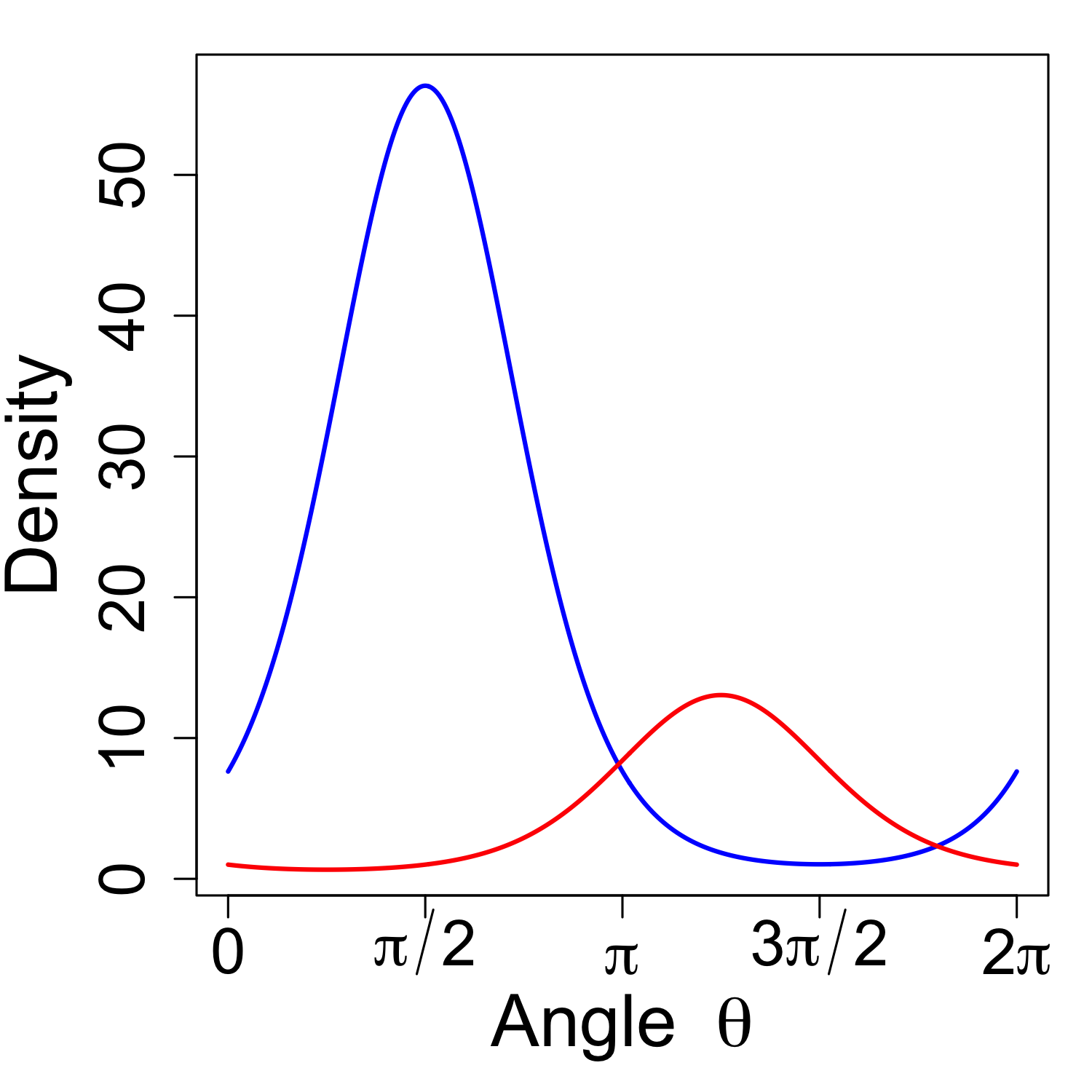}
		%\label{fig:y equals x}
	\end{subfigure}
	\begin{subfigure}[b]{0.32\textwidth}
		\centering
		\includegraphics[width=\textwidth]{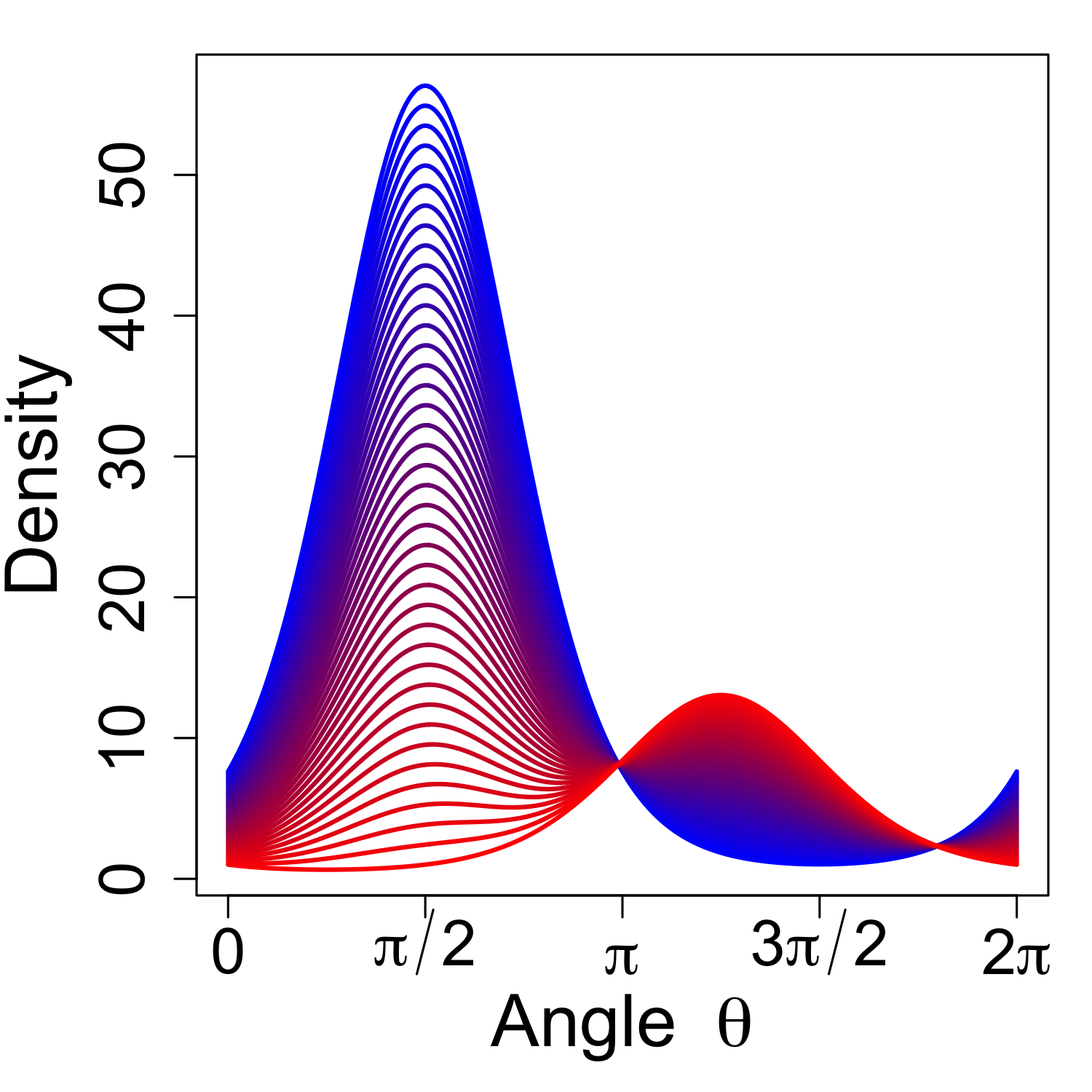}
		%\label{fig:three sin x}
	\end{subfigure}
	\begin{subfigure}[b]{0.32\textwidth}
		\centering
		\includegraphics[width=\textwidth]{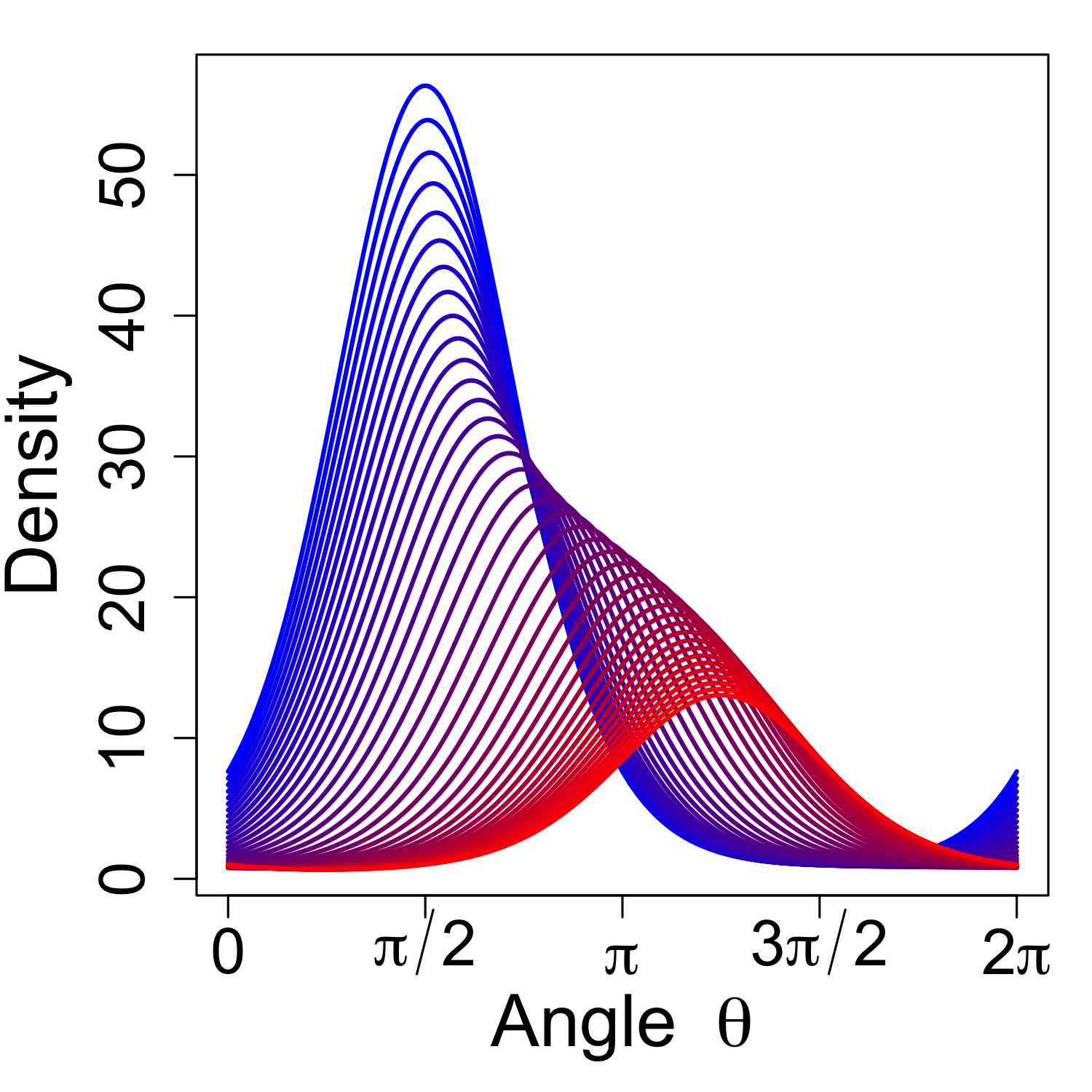}
		%\label{fig:three sin x}
	\end{subfigure}	
	\caption{	Comparison of interpolations between two distinct von Mises–Fisher distributions (left). Interpolation paths are shown for the standard $L_2$ geometry (middle) and the proposed $\WL$ distance metric (right).	}
	\label{fig:compare-geometry-vMF}
\end{figure}

In this paper, we introduce a new geometry-aware distance between vMF distributions, drawing motivation from the theory of optimal transport. Specifically, we propose a Wasserstein-like distance, denoted $\WL$, that decomposes the discrepancy between two vMF distributions into two interpretable components: a geodesic term that captures minimal displacement in mean direction, and a variance-like term that quantifies differences in concentration. By leveraging the high-concentration regime of vMF distributions, we employ Gaussian approximations on the tangent space of the unit hypersphere to derive a tractable, closed-form expression for $\WL$. The resulting metric satisfies desirable theoretical properties, including continuity, symmetry, and topological consistency, and it reveals a latent geometric structure on the space of non-degenerate vMF distributions. Its transport-based construction also distinguishes it from conventional $L_2$ metrics, as illustrated in Figure~\ref{fig:compare-geometry-vMF}, which contrasts interpolation paths under both geometries, akin to visualizations commonly found in the optimal transport literature.

The primary application we consider is mixture model reduction, where the objective is to compress a mixture of vMF components into a more parsimonious representation while preserving its geometric structure. We show that the $\WL$ distance enables efficient greedy and partitional reduction strategies by facilitating closed-form barycenter computations and performant pairwise dissimilarity evaluation. This leads to practical algorithms for simplifying complex vMF mixtures in high-dimensional settings.

We support our methodology with both synthetic and real-world experiments. Simulations illustrate how the $\WL$ distance meaningfully distinguishes vMF distributions across a range of location and concentration regimes, outperforming standard $L_2$-based methods. Applications to high-dimensional embedding data, including sentence embeddings from biomedical abstracts and deep visual features from image data, demonstrate the utility of the proposed geometry in structure-preserving model compression and interpretable classification. Altogether, this work contributes a new, theoretically grounded distance metric tailored to vMF distributions, expanding the statistical toolbox for inference and representation learning in non-Euclidean settings.

%%%%%%%%%%%%%%%%%%%%%%%%%%%
\section{Preliminaries}

We begin by introducing notations and conventions used throughout the paper. Bold lower- and upper-case letters denote vectors and matrices, respectively. Let $\bbS^{d-1} = \lbrace \bfx \in \bbR^d ~|~ \|\bfx\| = 1\rbrace$ denote the unit hypersphere in $\bbR^d$, i.e., the set of unit-norm vectors forming a $(d-1)$-dimensional Riemannian manifold embedded in $\bbR^d$ for $d \geq 1$. We denote by $\bbRpos$ and $\bbR_{\geq 0}$ the sets of strictly positive and nonnegative real numbers. Blackboard bold letters $\bbP_i$, indexed by $i \in \bbN$, are used to denote vMF distributions. The notation $[n]$ for $n \in \bbN$ denotes the set ${1, 2, \ldots, n}$. The orthogonal group in dimension $d$ is denoted by $\mathcal{O}(d)$, consisting of all $d \times d$ orthogonal matrices $\bfQ$ satisfying $\bfQ^\top \bfQ = \bfQ \bfQ^\top = \bfI_d$, where $\bfI_d$ is the $d$-dimensional identity matrix. For a vector $\bfx = [x_1, \ldots, x_n] \in \bbR^n$, we write $\bfx \in \Delta_{n-1}$ to indicate that it lies in the $(n-1)$-dimensional probability simplex, i.e., $\sum_{i=1}^n x_i = 1$ and $x_i > 0$ for all $i \in [n]$.

\subsection{von Mises-Fisher distribution}

Let $\bfx \in \bbR^d$ be a random vector following the von Mises–Fisher distribution $vMF(\bfmu, \kappa)$, where $\bfmu \in \bbS^{d-1}$ is the mean direction and $\kappa \geq 0$ is the concentration parameter. Its probability density function is given by \begin{equation}\label{definition:vMF} p(\bfx \mid \bfmu, \kappa) = C_d (\kappa) \exp \left( \kappa \bfmu^\top \bfx \right), \end{equation} with normalizing constant \begin{equation}\label{definition:normalizingconstant} C_d (\kappa) = \frac{\kappa^{d/2 - 1}}{(2\pi)^{d/2} I_{d/2 - 1}(\kappa)}, \end{equation} where $I_\nu$ denotes the modified Bessel function of the first kind of order $\nu$ \citep{abramowitz_1972_HandbookMathematicalFunctions}. The concentration parameter $\kappa$ inversely reflects the spread of the distribution: higher values of $\kappa$ lead to tighter concentration around the mean direction $\bfmu$, while smaller values yield broader distributions. In the limiting case $\kappa \to 0$, the distribution converges to the uniform distribution over the sphere.

The vMF distribution can be viewed as the spherical analog of the isotropic Gaussian distribution. Consider a Gaussian distribution centered at $\bfmu \in \bbS^{d-1}$ with precision parameter $\kappa$, i.e., inverse variance.  If we constrain the domain to the unit sphere, the corresponding density becomes \begin{equation*} f(\bfx \mid \bfmu, \kappa) \propto \exp \left( -\frac{\kappa}{2}\|\bfx - \bfmu\|^2 \right) = \exp \left( -\frac{\kappa}{2} \bfx^\top \bfx + \kappa \bfmu^\top \bfx - \frac{\kappa}{2} \bfmu^\top \bfmu \right) \propto \exp \left( \kappa \bfmu^\top \bfx \right), \end{equation*} where the last proportionality follows from the fact that both $\|\bfx\| = 1$ and $\|\bfmu\| = 1$, rendering the norm terms constant. This highlights the vMF distribution's rotational symmetry around $\bfmu$. More importantly, it belongs to the canonical exponential family \citep{pitman_1937_SignificanceTestsWhich, koopman_1936_DistributionsAdmittingSufficient}, making it amenable to general exponential-family-based inference techniques for data residing on the hypersphere.

Despite these favorable properties, comparing vMF distributions remains analytically challenging due to the nonlinear dependence of the normalization constant $C_d(\kappa)$ on $\kappa$. For instance, while the KL divergence is a widely used statistical discrepancy, it lacks a closed-form expression for general vMF distributions because the Bessel function $I_\nu$ must be computed via an integral over a bounded domain \citep{abramowitz_1972_HandbookMathematicalFunctions}. The only known exception is when the reference distribution is uniform \citep{diethe_2015_NoteKullbackLeiblerDivergence}, a case which finds practical utility in variational inference \citep{gopal_2014_MisesfisherClusteringModels} and in latent variable models involving spherical priors \citep{xu_2018_SphericalLatentSpaces, davidson_2022_HypersphericalVariationalAutoEncoders}.

\subsection{Wasserstein distance between probability measures}
The Wasserstein distance is a fundamental metric from the theory of optimal transport that quantifies the discrepancy between two probability measures by minimizing the cost of transporting mass from one distribution to another \citep{villani_2003_TopicsOptimalTransportation, villani_2009_OptimalTransportOld}. Let $(\mathcal{X}, d)$ be a complete, separable metric space with distance $d$, and denote by $\mu$ and $\nu$  two probability measures on $\mathcal{X}$ with finite second moments. The squared 2-Wasserstein distance is defined by \begin{equation}\label{eq-wasserstein-definition} \mathcal{W}^2(\mu, \nu) := \inf_{\gamma \in \Gamma(\mu, \nu)} \int_{\mathcal{X} \times \mathcal{X}} d^2(x, y)  d\gamma(x, y), \end{equation} where $\Gamma(\mu, \nu)$ denotes the set of all couplings of $\mu$ and $\nu$, i.e., joint distributions $\gamma$ on $\mathcal{X} \times \mathcal{X}$ with marginals $\mu$ and $\nu$.

The Wasserstein distance arises naturally from Monge's formulation of optimal transport and has gained widespread use in modern statistics and machine learning due to its sensitivity to geometric structure. Notably, it metrizes weak convergence plus convergence of second moments, thus offering robust geometric interpretations and favorable inferential behavior \citep{panaretos_2020_InvitationStatisticsWasserstein, you_2024_WassersteinMedianProbability}.

When $\mathcal{X} = \bbR^d$ and both $\mu$ and $\nu$ are Gaussian with means $\boldsymbol{m}_1$, $\boldsymbol{m}_2$ and covariance matrices $\bfSigma_1$, $\bfSigma_2$, the Wasserstein distance admits a closed-form expression: \begin{equation}\label{eq-wasserstein-gaussian-general} \mathcal{W}^2(\mathcal{N}(\boldsymbol{m}_1, \bfSigma_1), \mathcal{N}(\boldsymbol{m}_2, \bfSigma_2)) = \|\boldsymbol{m}_1 - \boldsymbol{m}_2\|^2 + \mathcal{BW}^2(\bfSigma_1, \bfSigma_2), \end{equation} where the second term, known as the Bures–Wasserstein distance \citep{takatsu_2011_WassersteinGeometryGaussian}, is given by \begin{equation}\label{eq-bures-wasserstein} \mathcal{BW}^2(\bfSigma_1, \bfSigma_2) := \mathrm{Tr} \left( \bfSigma_1 + \bfSigma_2 - 2\left( \bfSigma_2^{1/2} \bfSigma_1 \bfSigma_2^{1/2} \right)^{1/2} \right). \end{equation} The Bures–Wasserstein distance characterizes the discrepancy in second-order structure between Gaussian distributions.

This structure of optimal transport admits natural generalization to statistical manifolds and Riemannian settings. In particular, for probability measures supported on a Riemannian manifold $(\mathcal{M}, g)$, the Wasserstein distance generalizes by replacing the Euclidean cost $d^2(x, y)$ in Equation~\eqref{eq-wasserstein-definition} with the squared geodesic distance $d_{\mathcal{M}}^2(x, y)$. However, closed-form expressions for $\mathcal{W}(\mu, \nu)$ in this setting are rare, and exact computation is often intractable, especially for non-Gaussian distributions or those defined on curved manifolds. In this context, Gaussian approximations on tangent spaces offer a powerful and tractable alternative. This approach is particularly compelling for the vMF distribution, whose high-concentration regime allows accurate approximation by Gaussian distributions supported on the tangent space of the sphere. This insight provides the foundation for our proposed Wasserstein-like dissimilarity measure, which is tailored to the geometric structure of directional data and developed in the next section.
%%%%%%%%%%%%%%%%%%%%%%%%%%%
\section{Main}
\subsection{Proposed distance on the space of von Mises-Fisher distributions}

We begin by introducing a novel dissimilarity measure for von Mises–Fisher (vMF) distributions. Let $\bbP_1 = vMF(\bfmu_1, \kappa_1)$ and $\bbP_2 = vMF(\bfmu_2, \kappa_2)$ be two vMF distributions supported on the unit hypersphere $\bbS^{d-1} \subset \bbR^d$, where $\kappa_1, \kappa_2 \in (0, \infty)$. This excludes both uniform distributions corresponding to $\kappa=0$ and Dirac measures, i.e., point masses as $\kappa \to \infty$. We specifically consider the high-concentration regime, where each distribution places a substantial amount of mass around its respective mean direction. From an information-theoretic and geometric perspective, we define the proposed Wasserstein-like dissimilarity $\WL_d(\bbP_1, \bbP_2)$ as \begin{equation}\label{def-WL-distance} \WL_d^2(\bbP_1, \bbP_2) = \arccos^2 (\langle \bfmu_1, \bfmu_2 \rangle) + (d-1)\left(\frac{1}{\sqrt{\kappa_1}} - \frac{1}{\sqrt{\kappa_2}}\right)^2, \end{equation} where $\langle \cdot, \cdot \rangle$ denotes the standard inner product. For notational brevity, we omit the subscript $d$ in $\WL_d$ unless dimensionality needs to be emphasized.

\subsubsection{Motivation}

The proposed dissimilarity quantifies the discrepancy between two vMF distributions parameterized by $(\bfmu_1, \kappa_1)$ and $(\bfmu_2, \kappa_2)$ by separately treating differences in mean direction and concentration, under appropriate geometries. To construct $\WL$, we revisit the high-concentration approximation of the vMF distribution \citep{mardia_2000_DirectionalStatistics}: when $\bfx \sim vMF(\bfmu, \kappa)$ and $\kappa$ is large, the distribution becomes approximately Gaussian in the tangent space of the sphere: \begin{equation*} \bfx \approx \bfmu + \bfv, \quad \text{where} \quad \bfv \sim \calN\left(\boldsymbol{0}, \frac{1}{\kappa}(\bfI_d - \bfmu \bfmu^\top)\right). \end{equation*} Here, $\bfv$ lies in the tangent space $T_{\bfmu} \bbS^{d-1}$, and $\bfI_d - \bfmu \bfmu^\top$ is the projection matrix onto the hyperplane orthogonal to $\bfmu$, asserting that radial variation is negligible for large $\kappa$.

This motivates an approximate decomposition of the Wasserstein distance as \begin{equation}\label{def-vMF-wasserstein-decomposition} \calW^2 (\bbP_1, \bbP_2) \approx \calW_{\bfmu}^2 (\delta_{\bfmu_1}, \delta_{\bfmu_2}) + \calW_{\bfSigma}^2 (\bfSigma_1, \bfSigma_2), \end{equation} where $\delta_{\bfmu_i}$ denotes a Dirac measure at $\bfmu_i$, and $\bfSigma_i$ is the approximate covariance of $\bbP_i$. The first term, $\calW_{\bfmu}^2$, corresponds to the squared geodesic distance on the sphere, which is the squared arc length of the great circle connecting $\bfmu_1$ and $\bfmu_2$:
\begin{equation*}
	\calW_{\bfmu}^2 (\delta_{\bfmu_1}, \delta_{\bfmu_2}) = \arccos^2 (\langle \bfmu_1, \bfmu_2 \rangle).
\end{equation*}
We refer to this component as the geodesic transport term to emphasize its characterization along the shortest path connecting the two points.

We now focus on the second term, which compares the approximate covariances. A natural candidate is the Bures–Wasserstein distance. This formulation, however, requires care in a sense that the matrix $\bfSigma = \frac{1}{\kappa}(\bfI_d - \bfmu \bfmu^\top)$ is rank-deficient, having rank $d-1$. Although its square root is well-defined as $\bfSigma^{1/2} = \frac{1}{\sqrt{\kappa}}(\bfI_d - \bfmu \bfmu^\top)$ due to idempotency of the projection matrix, using rank-deficient matrices in Bures–Wasserstein computations introduces complications. Products like $\bfSigma_2^{1/2} \bfSigma_1 \bfSigma_2^{1/2}$ may be ill-posed or degenerate when the column spaces of the two matrices do not align, potentially leading to discontinuities or infinite transport cost. This compromises desirable properties such as continuity, uniqueness, and computational stability.

To overcome this issue, we exploit the geometry of the sphere via parallel transport, which describes a way of moving vectors from one tangent space to another along a curve on a smooth manifold. This provides an isometric mapping between tangent spaces on a Riemannian manifold. Specifically, parallel transport preserves inner products and hence distances and angles. We use this idea to align the approximate covariances $\bfSigma_1$ and $\bfSigma_2$ by transporting $\bfSigma_1$ from $T_{\bfmu_1} \bbS^{d-1}$ to $T_{\bfmu_2} \bbS^{d-1}$ and comparing the result in a common coordinate system. Let $\bfU \in \bbR^{d \times (d-1)}$ be an orthonormal basis for $T_{\bfmu} \bbS^{d-1}$. The projected covariance in tangent coordinates is
\begin{equation*}
	\bfM = \frac{1}{\kappa} \bfU^\top (\bfI_d - \bfmu \bfmu^\top) \bfU = \frac{1}{\kappa} \bfI_{d-1},
\end{equation*}
since $\bfU^\top \bfmu = \boldsymbol{0}$ and $\bfU^\top (\bfI_d - \bfmu \bfmu^\top) \bfU = \bfI_{d-1}$. Thus, the tangentialized approximate covariance matrix is still isotropic.

For $\bfmu_1, \bfmu_2 \in \bbS^{d-1}$, there exists an orthogonal transformation $\bfQ \in \mathcal{O}(d)$ such that $\bfQ \bfmu_1 = \bfmu_2$ and for $\bfQ(T_{\bfmu_1} \bbS^{d-1}) = T_{\bfmu_2} \bbS^{d-1}$. In other words, such $\bfQ$ is either a rotation or reflection carrying $\bfmu_1$ to $\bfmu_2$ along the shortest path on $\bbS^{d-1}$ and mapping tangent vectors at $\bfmu_1$ to tangent vectors at $\bfmu_2$. Furthermore, it induces a scheme to transport the ambient-space covariance $\bfSigma_1$ under $\bfQ$, yielding $\tilde{\bfSigma}_1 = \bfQ \bfSigma_1 \bfQ^\top$. Since $\bfU_2$ spans the tangent space at $\bfmu_2$, we can represent the transported version of the approximate covariance matrix $\bfSigma_1$ in the tangent coordinates as
\begin{equation}\label{eq-tangent-transformed-approximate-covariance}
	\tilde{\bfM}_1 = \bfU_2^\top \tilde{\bfSigma}_1 \bfU_2 = \bfU_2^\top \bfQ \bfSigma_1 \bfQ^\top \bfU_2.
\end{equation}
If we define $\bfW := \bfU_2^\top \bfQ \bfU_1$, then $\bfW \in \mathcal{O}(d-1)$ since $\bfW^\top \bfW = \bfW \bfW^\top  = \bfI_{d-1}$. Hence, Equation \eqref{eq-tangent-transformed-approximate-covariance} can be simplified as 
\begin{equation}\label{eq-tangent-transformed-approximate-covariance-simplified}
	\tilde{\bfM}_1 = (\bfU_2^\top \bfQ \bfU_1) (\bfU_1^\top \bfSigma_1  \bfU_1)(\bfU_1^\top \bfQ^\top \bfU_2) =  \bfW \bfM_1 \bfW^\top.
\end{equation}
We previously observed that $\bfM_1$ is an isotropic covariance matrix, hence
\begin{equation}\label{eq-two-tangentialized-matrices}
\tilde{\bfM}_1 =  \bfW \left(
\frac{1}{\kappa_1} \bfI_{d-1} 
\right) \bfW^\top = \frac{1}{\kappa_1} \bfI_{d-1}\quad\textrm{and}\quad \bfM_2 = \frac{1}{\kappa_2}\bfI_{d-1}.
\end{equation}
Since $\tilde{\bfM}_1$ and $\bfM_2$ are both full-rank and aligned in a common tangent space, we propose to use the Bures-Wasserstein distance between these two matrices as a discrepancy between the two approximate covariances in the ambient space $\bbR^d$:
\begin{equation}\label{eq-W-EQ-covariance}
	W_{\bfSigma}^2 (\bfSigma_1, \bfSigma_2) := \mathcal{BW}^2 (\tilde{\bfM}_1, \bfM_2) = (d-1) \left(\frac{1}{\sqrt{\kappa_1}} - \frac{1}{\sqrt{\kappa_2}} \right)^2.
\end{equation}
The last equality follows from applying Equation  \eqref{eq-bures-wasserstein} to scalar multiples of the identity matrices:
\begin{align*}
\mathcal{BW}^2 (\tilde{\bfM}_1, \bfM_2) &= \mathcal{BW}^2 \left(\frac{1}{\kappa_1} \bfI_{d-1}, \frac{1}{\kappa_2} \bfI_{d-1}\right) \\ 
&= \mathrm{Tr}\left(
\frac{1}{\kappa_1} \bfI_{d-1} + \frac{1}{\kappa_2} \bfI_{d-1}
- 2 \left( 
\frac{1}{\sqrt{\kappa_1}} \bfI_{d-1} \frac{1}{\kappa_2} \bfI_{d-1} \frac{1}{\sqrt{\kappa_1}} \bfI_{d-1}
\right)^{1/2}
\right) \\
	&=\frac{1}{\kappa_1} \mathrm{Tr}\left(
\bfI_{d-1}
\right)
+ \frac{1}{\kappa_2}\mathrm{Tr}\left(
 \bfI_{d-1}
\right)
- \frac{2}{\sqrt{\kappa_1 \kappa_2}} \mathrm{Tr}\left(
 \bfI_{d-1}
\right)\\
&= \left(
\frac{1}{\sqrt{\kappa_1}} - \frac{1}{\sqrt{\kappa_2}}
\right)^2 \mathrm{Tr}(\bfI_{d-1}) \\
&= (d-1) \left(
\frac{1}{\sqrt{\kappa_1}} - \frac{1}{\sqrt{\kappa_2}}
\right)^2.
\end{align*}
This completes the construction of the Wasserstein-like dissimilarity $\WL$ as defined in Equation~\eqref{def-WL-distance}, where the second term that captures differences in concentration is referred to as the variance transport component.

\subsubsection{Theoretical properties}

Given the novel dissimilarity $\WL$ for comparing vMF distributions, it is imperative to verify its theoretical validity, starting with the question of well-posedness.

\begin{theorem}[Well-posedness]\label{theorem-wellposed}
	Let $\lbrace (\bfmu_i,\kappa_i)\rbrace_{i\in \mathcal{I}}$ be a collection of vMF distributions with $\kappa_i \in (0, \infty)$. Then the $\WL$ dissimilarity is: (1) well-defined, (2) continuous, (3) topologically consistent to induce a meaningful topology, and (4) well-behaved in extreme cases of the concentration parameters.
\end{theorem}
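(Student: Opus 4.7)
The plan is to exploit the fact that $\WL_d^2$ decouples additively into a purely angular component $\arccos^2(\langle \bfmu_1,\bfmu_2\rangle)$ and a purely concentration-dependent component $(d-1)(1/\sqrt{\kappa_1}-1/\sqrt{\kappa_2})^2$, and to verify each of the four claims in turn by treating $\WL$ as an $\ell_2$-product of two simpler distance functions defined on $\bbS^{d-1}$ and $\bbRpos$, respectively.

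For (1) well-posedness, I would first observe that $\langle \bfmu_1,\bfmu_2\rangle \in [-1,1]$ lies in the domain of $\arccos$ (with range $[0,\pi]$) and that $\kappa_i \in (0,\infty)$ ensures $1/\sqrt{\kappa_i}$ is finite; hence $\WL_d^2$ is a finite non-negative real number that depends only on the stated parameters, with no residual dependence on the auxiliary $\bfQ$, $\bfU_1$, $\bfU_2$ appearing in the derivation. I would then strengthen this to show $\WL$ is a genuine metric on $\bbS^{d-1} \times \bbRpos$ by noting that $d_{\bbS}(\bfmu_1,\bfmu_2):=\arccos(\langle \bfmu_1,\bfmu_2\rangle)$ is the standard geodesic metric on the sphere and $d_\kappa(\kappa_1,\kappa_2):=|1/\sqrt{\kappa_1}-1/\sqrt{\kappa_2}|$ is a metric on $\bbRpos$ (being the pullback of the Euclidean metric under the injection $\kappa \mapsto 1/\sqrt{\kappa}$); the $\ell_2$-combination of two metrics is always a metric, giving symmetry, non-negativity, identity of indiscernibles, and the triangle inequality in one stroke. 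Claim (2) follows immediately because $\WL_d^2$ is an explicit composition of continuous maps: the inner product, $\arccos$ on $[-1,1]$, squaring, and $\kappa \mapsto 1/\sqrt{\kappa}$ on $(0,\infty)$.

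For (3) topological consistency, I would identify the natural topology on the parameter space with the product topology of the subspace topology on $\bbS^{d-1} \subset \bbR^d$ and the usual topology on $\bbRpos$. The spherical geodesic distance $d_{\bbS}$ generates the usual subspace topology since it is equivalent to the chordal distance via the continuous monotone relation $\|\bfmu_1-\bfmu_2\|^2 = 2(1-\cos d_{\bbS})$, and $\kappa \mapsto 1/\sqrt{\kappa}$ is a homeomorphism of $(0,\infty)$ onto itself, so $d_\kappa$ also generates the usual topology. Therefore $\WL$ metrizes exactly the product topology on $\bbS^{d-1} \times \bbRpos$, which is the meaningful topology one would expect. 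For (4), I would analyze the two extreme regimes separately: as $\kappa_1,\kappa_2 \to \infty$, the variance-transport term $(d-1)(1/\sqrt{\kappa_1}-1/\sqrt{\kappa_2})^2 \to 0$ and $\WL$ collapses to the geodesic distance between mean directions, in agreement with vMF distributions degenerating to Dirac masses; as $\kappa_i \to 0^+$ (the uniform-distribution limit), $1/\sqrt{\kappa_i} \to \infty$ so $\WL$ diverges, correctly reflecting that the uniform regime lies outside the domain where the high-concentration Gaussian approximation is valid and should be treated as a boundary at infinity rather than a finite point.

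The main obstacle is not computational but definitional: the theorem groups four qualitative properties under one umbrella, and I expect the subtlest part to be pinning down precisely what claims (3) and (4) require. Specifically, the statement "topologically consistent to induce a meaningful topology" needs to be reformulated as the concrete assertion that $\WL$ metrizes the product topology on $\bbS^{d-1} \times \bbRpos$, and "well-behaved in extreme cases" needs to be split into a vanishing-variance limit and a divergence limit whose diverging behavior is itself interpreted as a desirable feature. Once these interpretations are fixed, the arguments reduce to elementary checks about continuous functions and product metrics, with no nontrivial analytical step beyond the basic properties of $\arccos$ and $\kappa \mapsto 1/\sqrt{\kappa}$.
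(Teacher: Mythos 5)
Your proposal is correct and follows essentially the same route as the paper: decompose $\WL$ into the angular term on $\bbS^{d-1}$ and the concentration term (the pullback of Euclidean distance under $\kappa \mapsto 1/\sqrt{\kappa}$), then verify each of the four properties by elementary arguments about continuous maps and product metrics. You are somewhat more explicit than the paper in a few places — in particular, the chordal/geodesic equivalence $\|\bfmu_1-\bfmu_2\|^2 = 2(1-\cos d_{\bbS})$ for topological consistency, and the observation that the $\ell_2$-combination of two metrics is a metric (which the paper actually defers to its Theorem~\ref{theorem-metric} rather than proving here) — but these are refinements of the same argument, not a different approach.
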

\begin{proof}[Proof of Theorem \eqref{theorem-wellposed}]
	First, the geodesic term $\arccos^2(\langle \bfmu_1, \bfmu_2 \rangle)$ is well-defined for all unit-norm mean directions since their dot product lies in $[-1,1]$. The assumption $\kappa_i \in (0, \infty)$ ensures that the variance transport term remains finite, excluding degenerate cases such as uniform and Dirac measures.
	
	Second, recall that a function is continuous if small perturbations in inputs induce small changes in outputs. The geodesic term depends continuously on the inner product $\langle \bfmu_1, \bfmu_2 \rangle$, which is itself continuous in $\bfmu_1$ and $\bfmu_2$. The variance transport term is continuous in $\kappa_1, \kappa_2$ since the map $\kappa \mapsto 1/\sqrt{\kappa}$ is continuous on $(0,\infty)$, and operations like subtraction, squaring, and scalar multiplication preserve continuity. Since $\WL$ is a sum of continuous functions, it is globally continuous.
	
	Third, the geodesic term induces the standard topology on $\bbS^{d-1}$, while the variance transport term corresponds to a Euclidean distance in a transformed coordinate system on $\bbRpos$. This transformation is a diffeomorphism and thus preserves the topology of the positive real line. Their squared sum defines a valid metric on the product space, thus inducing a meaningful topology that respects both angular and concentration structure.
	
	Lastly, we analyze two limiting cases of the concentraion parameters. As $\kappa_1, \kappa_2 \to 0$, both distributions converge to the uniform distribution and the variance transport term diverges, dominating the metric. This reflects the fact that small changes in low-concentration regimes can result in large distributional differences. When $\kappa_1, \kappa_2 \to \infty$, both distributions concentrate into point masses, and the variance transport term vanishes. In this limit, the dissimilarity smoothly reduces to the geodesic distance between mean directions. This concludes the proof.
\end{proof}

The topological consistency established above suggests that $\WL$ defines a distance over the product parameter space of vMF distributions. We next examine the geometry underlying the variance transport term by studying the metric space structure it induces on $\bbRpos$.

\begin{lemma}\label{lemma-positive-real}
	On the set of strictly positive real numbers, define a distance function $d:\bbRpos \times \bbRpos \rightarrow \mathbb{R}_{\geq 0}$ by 
	\begin{equation*}
		d(x,y) = c \left\lvert
		\frac{1}{\sqrt{x}} - \frac{1}{\sqrt{y}}
		\right\rvert,
	\end{equation*}
	for some constant $c>0$. Then, $(\bbRpos, d)$ is a one-dimensional Riemannian manifold with the metric tensor
	\begin{equation*}
		g(x) = \frac{c^2}{4}x^{-3} dx^2.
	\end{equation*}
\end{lemma}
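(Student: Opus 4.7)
The plan is to recognize $d$ as a pullback of the Euclidean distance on $\bbRpos$ through a diffeomorphism, and then read off the Riemannian metric tensor directly from the change of variables.

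First I would introduce the map $\phi:\bbRpos\to\bbRpos$ defined by $\phi(x)=c/\sqrt{x}$ and note that it is a smooth bijection with smooth inverse $\phi^{-1}(u)=c^2/u^2$, hence a diffeomorphism. With this map, the distance can be rewritten as $d(x,y)=|\phi(x)-\phi(y)|$, which is exactly the pullback under $\phi$ of the standard Euclidean distance $d_{\mathrm{Euc}}(u,v)=|u-v|$ on $\bbRpos\subset\bbR$. This immediately puts $d$ into a Riemannian framework, since $(\bbRpos,d_{\mathrm{Euc}})$ is the one-dimensional Euclidean manifold equipped with the flat metric $du^2$.

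Next I would compute the pullback of $du^2$ to $\bbRpos$ via $\phi$. Differentiating $u=\phi(x)=cx^{-1/2}$ gives $du=-\tfrac{c}{2}x^{-3/2}\,dx$, and therefore
\begin{equation*}
\phi^*(du^2)=\left(-\tfrac{c}{2}x^{-3/2}\right)^{2}dx^2=\tfrac{c^2}{4}x^{-3}\,dx^2,
\end{equation*}
which is precisely the claimed metric tensor $g(x)$. This step also shows that $g(x)$ is smooth and strictly positive on $\bbRpos$, so $g$ is a genuine Riemannian metric and $(\bbRpos,g)$ is a one-dimensional Riemannian manifold.

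Finally I would verify that the Riemannian distance induced by $g$ coincides with $d$. Because $\bbRpos$ is one-dimensional, the length of any piecewise smooth curve from $x$ to $y$ is at least $\left|\int_x^y \sqrt{g(t)}\,dt\right|=|\phi(y)-\phi(x)|$, with equality attained by the monotone reparametrization of $\phi^{-1}$ restricted to the segment between $\phi(x)$ and $\phi(y)$. Thus the induced geodesic distance equals $|\phi(x)-\phi(y)|=d(x,y)$, closing the argument. I do not expect a serious obstacle here; the only point requiring care is confirming that $\phi$ is a bona fide diffeomorphism on $\bbRpos$ (so that the pullback construction is legitimate globally, not just locally) and that no degeneracies arise as $x\to 0^+$ or $x\to\infty$, which is why the domain is explicitly restricted to strictly positive reals.
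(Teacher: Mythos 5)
Your proof is correct and follows essentially the same route as the paper's: both identify the distance as the pullback of the Euclidean distance under the map $x\mapsto 1/\sqrt{x}$ (you absorb the constant $c$ into the map, the paper multiplies by $c^2$ afterward), compute the pulled-back metric tensor by differentiating, and then confirm consistency by integrating the arc-length element. Your additional remark that the one-dimensional curve-length infimum is achieved by the monotone path is a slightly more explicit justification of the final step, but the argument is the same.
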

\begin{proof}[Proof of Lemma \ref{lemma-positive-real}]
	Consider the diffeomorphism $f:\bbRpos \rightarrow (0,\infty)$ given by $f(x) = 1/\sqrt{x}$. The squared differential of $f$ is 
	\begin{equation*}
		(df)^2 = (f'(x)dx)^2 = \left(-\frac{1}{2}x^{-3/2}\right)^2 dx^2 = \frac{1}{4}x^{-3}dx^2.
	\end{equation*}
	Multiplying by  the constant $c^2$, the pullback metric on $\bbRpos$ becomes
	\begin{equation*}
		g(x) = c^2 \cdot \frac{1}{4} x^{-3} dx^2 = \frac{c^2}{4}x^{-3} dx^2.
	\end{equation*}
	To verify that this metric induces the claimed distance, compute the length element  $ds$ associated with the metric tensor $g(x)$:
	\begin{equation*}
		ds = \sqrt{g(x)} dx = \frac{c}{2} x^{-3/2} dx.
	\end{equation*}
	Hence, the distance between $x,y \in \bbRpos$ is 
	\begin{equation*}
		d(x,y) = \left\lvert \int_x^y ds \right\rvert =  \left\lvert 
		\int_{x}^y \frac{c}{2} z^{-3/2}dz
		\right\rvert = \left\lvert -c  \left(y^{-1/2} - x^{-1/2}\right)\right\rvert = c \left\lvert \frac{1}{\sqrt{x}} - \frac{1}{\sqrt{y}} \right\rvert,
	\end{equation*}
	which is consistent to the above distance function. 
	
	Since $\bbRpos$ is a smooth manifold and $g(x)$ is a smoothly varying, positive-definite symmetric bilinear form on the tangent space, $(\bbR,d)$ is indeed a one-dimensional Riemannian manifold and the distance function $d(\cdot,\cdot)$ corresponds to the geodesic distance induced by this Riemannian metric $g(x)$.
\end{proof}

By setting the constant $c = \sqrt{d-1}$, Lemma \ref{lemma-positive-real} shows that the variance transport component in the $\WL$ dissimilarity corresponds to a geodesic distance on $\bbRpos$ under a well-defined Riemannian metric. Together with the geodesic transport term, this leads to the following result:

\begin{theorem}\label{theorem-metric}
	The $\WL$ dissimilarity is a valid distance metric on the space of non-degenerate vMF distributions.
\end{theorem}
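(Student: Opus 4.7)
The plan is to recognize $\WL$ as the Pythagorean product of two established metrics: the geodesic distance $d_{\bbS}(\bfmu_1,\bfmu_2)=\arccos(\langle\bfmu_1,\bfmu_2\rangle)$ on $\bbS^{d-1}$ and the concentration distance $d_{\kappa}(\kappa_1,\kappa_2)=\sqrt{d-1}\,\lvert 1/\sqrt{\kappa_1}-1/\sqrt{\kappa_2}\rvert$ on $\bbRpos$, so that $\WL^2(\bbP_1,\bbP_2)=d_{\bbS}^2(\bfmu_1,\bfmu_2)+d_\kappa^2(\kappa_1,\kappa_2)$. The groundwork has already been laid: $d_{\bbS}$ is the standard intrinsic metric on the hypersphere, and Lemma~\ref{lemma-positive-real} (with $c=\sqrt{d-1}$) has established that $d_{\kappa}$ is the geodesic distance of a one-dimensional Riemannian manifold, hence a bona fide metric on $\bbRpos$. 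Given this, verification of the four axioms reduces to a standard product-of-metrics argument.

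First, I would dispense with non-negativity and symmetry, both of which are immediate from the form of Equation~\eqref{def-WL-distance} since each summand is a nonnegative symmetric function of its arguments. Next, for the identity of indiscernibles, I would argue that $\WL(\bbP_1,\bbP_2)=0$ forces both summands to vanish; vanishing of the geodesic term gives $\bfmu_1=\bfmu_2$ on $\bbS^{d-1}$, while vanishing of the variance-transport term gives $\kappa_1=\kappa_2$ by Lemma~\ref{lemma-positive-real}. Combined with the fact that the vMF parameterization is identifiable on the non-degenerate regime $\kappa\in(0,\infty)$, this yields $\bbP_1=\bbP_2$, and the converse is trivial.

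The main obstacle, and the only step requiring genuine work, is the triangle inequality. My plan is to fix three distributions $\bbP_i=vMF(\bfmu_i,\kappa_i)$ for $i=1,2,3$ and set
\begin{equation*}
    a_1=d_{\bbS}(\bfmu_1,\bfmu_2),\quad a_2=d_{\bbS}(\bfmu_2,\bfmu_3),\quad b_1=d_\kappa(\kappa_1,\kappa_2),\quad b_2=d_\kappa(\kappa_2,\kappa_3).
\end{equation*}
Using the known triangle inequalities for $d_{\bbS}$ and $d_{\kappa}$ separately, I can bound $d_{\bbS}(\bfmu_1,\bfmu_3)\leq a_1+a_2$ and $d_\kappa(\kappa_1,\kappa_3)\leq b_1+b_2$. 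Substituting into $\WL^2(\bbP_1,\bbP_3)$ and taking square roots reduces the claim to showing $\sqrt{(a_1+a_2)^2+(b_1+b_2)^2}\leq\sqrt{a_1^2+b_1^2}+\sqrt{a_2^2+b_2^2}$, which is exactly the triangle inequality for the Euclidean norm applied to the vectors $(a_1,b_1)$ and $(a_2,b_2)$ in $\bbR^2$, i.e., Minkowski's inequality with exponent $2$.

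Finally, I would close by observing that the argument in fact shows $\WL$ coincides with the intrinsic product metric of the Riemannian manifold $\bbS^{d-1}\times\bbRpos$ equipped with the sum of the round metric on $\bbS^{d-1}$ and the pullback metric $\frac{d-1}{4}\kappa^{-3}d\kappa^2$ from Lemma~\ref{lemma-positive-real}. This not only certifies $\WL$ as a valid distance metric but also situates it within a clean geometric framework, which is helpful later when invoking barycenters and geodesic interpolation.
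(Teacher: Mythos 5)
Your proof is correct and reaches the same conclusion, but via a more elementary, self-contained route than the paper's. The paper invokes the standard Riemannian-geometry fact that the product manifold $M_1 \times M_2$ carries a product metric whose geodesic distance is $\sqrt{d_{M_1}^2 + d_{M_2}^2}$ (citing Lee 1997), then observes that $\WL$ has exactly this form with $M_1 = \bbS^{d-1}$ under the round metric and $M_2 = \bbRpos$ under the metric from Lemma~\ref{lemma-positive-real}, and concludes $\WL$ is a metric on the parameter space $\Theta$. You instead verify the four axioms directly: non-negativity and symmetry by inspection, identity of indiscernibles by noting that both summands vanish iff $\bfmu_1 = \bfmu_2$ and $\kappa_1 = \kappa_2$, and the triangle inequality by combining the factor triangle inequalities with Minkowski's inequality for the Euclidean norm on $\bbR^2$. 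That Minkowski step is precisely the mechanism hidden inside the cited product-manifold theorem, so the two arguments encode the same mathematics at different levels of abstraction. Your version is fully explicit and needs no external citation; it also tightens a small gap in the paper by explicitly invoking identifiability of the non-degenerate vMF family to pass from equality of parameters to equality of distributions, rather than tacitly identifying the space of distributions with $\Theta$. The paper's version is shorter and immediately packages $\WL$ within the Riemannian framework reused for barycenters and geodesics later, a connection you also note in your closing remark. Both proofs are valid.
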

\begin{proof}[Proof of Theorem \ref{theorem-metric}]
The parameters of a vMF distribution lie in the product manifold $\Theta := \bbS^{d-1} \times \bbRpos$. Both $\bbS^{d-1}$ and $\bbRpos$ are Riemannian manifolds with geodesic distances given by the arc length and Lemma~\ref{lemma-positive-real}, respectively.

By a standard result in Riemannian geometry \citep{lee_1997_RiemannianManifoldsIntroduction}, the product manifold $M^\otimes := M_1 \times M_2$ of two Riemannian manifolds $(M_1, g_1)$ and $(M_2, g_2)$ inherits a natural product metric $g^\otimes$ and corresponding geodesic distance \begin{equation}\label{eq-geodesic-distance-on-product-manifold} d^\otimes((p_1, p_2), (q_1, q_2)) = \sqrt{d_{M_1}(p_1, q_1)^2 + d_{M_2}(p_2, q_2)^2}. \end{equation} Here, $p_i, q_i \in M_i$ and $d_{M_i}$ denotes the geodesic distance on $M_i$ for $i=1,2$. In our case, the $\WL$ dissimilarity exactly matches this form, with $M_1 = \bbS^{d-1}$ and $M_2 = \bbRpos$. Therefore, $\WL$ satisfies the axioms of a metric and defines a valid distance on $\Theta$.
\end{proof}

The space of non-degenerate vMF distributions can thus be viewed as a Riemannian manifold equipped with a product structure over its parameters. Under the high-concentration Gaussian approximation, the $\WL$ dissimilarity corresponds to the geodesic distance on this manifold. Hence, we will treat $\WL$ as a valid distance for vMF distributions in all subsequent sections.

%%%%%%%%%%%%%%%%%%%%%%%%%%%
\subsection{Barycenter}

The $\WL$ distance in Equation~\eqref{def-WL-distance} endows the space of vMF distributions with a Riemannian manifold structure. Given a collection of vMF distributions, this induced geometry allows us to define notions of central tendency, analogous to means in Euclidean space, through the concept of barycenters \citep{afsari_2011_Riemannian$L_p$Center}. Specifically, we define the geometric mean of a finite set of vMF distributions $\lbrace \bbP_i\rbrace_{i=1}^n$ as the minimizer of the weighted sum of squared $\WL$ distances:
\begin{equation}\label{barycenter-measure-form}
	F(\bbP) = \sum_{i=1}^n w_i \WL^2 (\bbP, \bbP_i),
\end{equation}
where $\bfw = [w_1, \ldots, w_n] \in \Delta_{n-1}$ is a probability weight vector. Equivalently, expressing the functional in terms of the distribution parameters $(\bfmu, \kappa)$ yields
\begin{equation}\label{barycenter-parameter-form}
	F(\bfmu,\kappa) = \sum_{i=1}^n w_i \left\lbrace
	\arccos^2(\langle \bfmu, \bfmu_i\rangle) + 
	(d-1)\left(\frac{1}{\sqrt{\kappa}} - \frac{1}{\sqrt{\kappa_1}}\right)^2\right\rbrace.
\end{equation}
We refer to any minimizer $(\hat{\bfmu}, \hat{\kappa})$ of this functional as a barycenter. This terminology is justified by the analogy to the Wasserstein barycenter, which defines a geometric mean of probability measures under optimal transport \citep{agueh_2011_BarycentersWassersteinSpace}. In our setting, the minimizer corresponds to a unique vMF distribution that best summarizes the given collection in terms of the $\WL$ geometry.

The barycenter functional is well-defined, which follows directly from arguments similar to those in Theorem~\ref{theorem-wellposed}. While the existence of a minimizer is guaranteed due to compactness and continuity, the question of uniqueness is more subtle. We address this in the following result.

\begin{theorem}\label{theorem-barycenter-uniqueness}
	Let $\bbP_1, \ldots, \bbP_n$ be a collection of vMF distributions parameterized by $(\bfmu_i, \kappa_i)$ for $i = 1, \ldots, n$. If the mean directions $\bfmu_1, \ldots, \bfmu_n$ are all contained within an open geodesic ball of radius $\pi/2$ centered at some $\bfx \in \bbS^{d-1}$, then the barycenter functional admits a unique minimizer $(\hat{\bfmu}, \hat{\kappa})$.
\end{theorem}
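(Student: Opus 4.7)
The key structural observation is that the $\WL^2$ objective splits as
\begin{equation*}
F(\bfmu,\kappa) = F_{\bfmu}(\bfmu) + F_{\kappa}(\kappa),
\end{equation*}
where $F_{\bfmu}(\bfmu) = \sum_i w_i \arccos^2(\langle \bfmu, \bfmu_i\rangle)$ depends only on the mean direction and $F_{\kappa}(\kappa) = (d-1)\sum_i w_i (\kappa^{-1/2} - \kappa_i^{-1/2})^2$ depends only on the concentration. Because the two parameter blocks are independent, a joint minimizer is unique if and only if each sub-problem has a unique minimizer. My plan is therefore to dispatch the two coordinates separately.

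The concentration part is the easy one. Introduce the diffeomorphism $t = 1/\sqrt{\kappa}$ from $\bbRpos$ onto $\bbRpos$. Under this reparametrization $F_{\kappa}$ becomes a strictly convex quadratic in $t$, namely $(d-1)\sum_i w_i (t - 1/\sqrt{\kappa_i})^2$, with unique minimizer $\hat{t} = \sum_i w_i / \sqrt{\kappa_i}$. Pulling back through the diffeomorphism gives the unique minimizer $\hat{\kappa} = (\sum_i w_i /\sqrt{\kappa_i})^{-2}$, which is automatically in $\bbRpos$ since the $\kappa_i$ are.

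The directional part is the weighted \Frechet functional on $\bbS^{d-1}$ with the standard round metric. This is where I would invoke the classical uniqueness theorem of Karcher, sharpened by Afsari, which states that on a complete Riemannian manifold whose sectional curvature is bounded above by $\Delta>0$, the weighted \Frechet mean of a set of points contained in an open geodesic ball of radius $r<\pi/(2\sqrt{\Delta})$ exists, lies in the same ball, and is unique. Since $\bbS^{d-1}$ has constant sectional curvature equal to $1$, the hypothesis that all $\bfmu_i$ lie in an open geodesic ball of radius $\pi/2$ around some $\bfx$ is exactly the hypothesis of that theorem. This directly yields a unique $\hat{\bfmu} \in \bbS^{d-1}$ minimizing $F_{\bfmu}$.

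The main obstacle is really the directional minimization: on a positively curved manifold the squared distance function need not be globally convex, so one cannot simply appeal to convexity on all of $\bbS^{d-1}$. The radius-$\pi/2$ condition is precisely the sharp threshold that guarantees geodesic convexity of each $\arccos^2(\langle \cdot,\bfmu_i\rangle)$ on the enclosing ball and that confines any minimizer to that ball (so that gradient arguments can close). I would therefore cite \citet{afsari_2011_Riemannian$L_p$Center} explicitly for this step and then combine it with the concentration argument above: because $F = F_{\bfmu} + F_{\kappa}$ and each summand has a unique minimizer in its respective domain, the joint minimizer $(\hat{\bfmu},\hat{\kappa})$ is unique on $\bbS^{d-1}\times \bbRpos$, completing the proof.
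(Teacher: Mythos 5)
Your proof is correct and takes essentially the same route as the paper: both decompose $F$ into independent directional and concentration sub-problems, invoke the classical Fr\'echet-mean uniqueness result on $\bbS^{d-1}$ under the open-ball-of-radius-$\pi/2$ hypothesis (the paper via \citet{kendall_1990_ProbabilityConvexityHarmonic} and \citet{bhattacharya_2012_NonparametricInferenceManifolds}, you via \citet{afsari_2011_Riemannian$L_p$Center} — the same theorem), and use strict convexity in the reparametrization $t = 1/\sqrt{\kappa}$ for the concentration term. The only cosmetic difference is that you also display the closed-form minimizer $\hat{\kappa}$ inside the proof, which the paper defers to a later equation.
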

\begin{proof}[Proof of Theorem \ref{theorem-barycenter-uniqueness}]
	The barycenter functional $F(\bfmu, \kappa)$ from Equation~\eqref{barycenter-parameter-form} can be decomposed into two independent terms: 
	\begin{equation}\label{eq-barycenter-uniqueness}
		F(\bfmu, \kappa) = \sum_{i=1}^n w_i \arccos^2 (\langle \bfmu, \bfmu_i \rangle) + (d-1) \sum_{i=1}^n w_i \left(\frac{1}{\sqrt{\kappa}} - \frac{1}{\sqrt{\kappa_i}}\right)^2.
	\end{equation}
	
	The first term corresponds to a weighted sum of squared geodesic distances on the unit hypersphere. This is equivalent to computing the weighted \Frechet mean of the unit vectors $\bfmu_i's$ with respect to the canonical metric on $\bbS^{d-1}$, which is a well-studied problem on a general Riemannian manifold $\calM$ \citep{kendall_1990_ProbabilityConvexityHarmonic}. Consider a collection of points $x_1, \ldots, x_n \in \calM$, which constitutes an empirical measure. If it is supported in an open geodesic ball centered at some $x \in \calM$ with a radius $r^*/2$ for $r^* = \textrm{min} \lbrace \textrm{inj}(\calM), \pi/\sqrt{C^*}\rbrace $ where $\textrm{inj}(\mathcal{M})$ is the injectivity radius and $C^*$ is the least upper bound of sectional curvature of $\calM$, then a unique sample mean exists within the ball \citep{bhattacharya_2012_NonparametricInferenceManifolds}. For the unit hypersphere $\bbS^{d-1}$, we have $\mathrm{inj}(\bbS^{d-1}) = \pi$ and $C^* = 1$ as it has constant sectional curvature. Therefore, the uniqueness of the minimizer $\hat{\bfmu}$ is guaranteed as long as the support lies within an open ball of radius less than $\pi/2$, which is precisely the assumption in the theorem.
	
	The second term in Equation~\eqref{eq-barycenter-uniqueness} involves minimizing a strictly convex function of $1/\sqrt{\kappa}$. Since the map $\kappa \mapsto 1/\sqrt{\kappa}$ is a diffeomorphism from $(0,\infty)$ onto $(0,\infty)$, and the squared difference function is strictly convex, the weighted sum is strictly convex as well. Thus, there exists a unique minimizer $\hat{\kappa} > 0$. Combining the uniqueness of both components yields the uniqueness of the barycenter $(\hat{\bfmu}, \hat{\kappa})$.
\end{proof}

The computation of the vMF barycenter naturally decomposes into two subproblems, owing to the additive form of the functional in Equation~\eqref{barycenter-parameter-form}. According to Theorem~\ref{theorem-barycenter-uniqueness}, the optimal concentration parameter admits a closed-form expression: 
\begin{equation}\label{solution-optimal-concentration}
	\hat{\kappa} = \left(\sum_{i=1}^n \frac{w_i}{\sqrt{\kappa_i}}\right)^{-2}.
\end{equation} 
Since the concentration parameter is analogous to the reciprocal of a variance, this formula may be interpreted as a type of harmonic mean of the standard deviations. Notably, $\hat{\kappa}$ is dimension-agnostic: it depends only on the concentration parameters and weights, not on the intrinsic dimensionality of the hypersphere.

In contrast, estimating the barycenter of the mean directions $\hat{\bfmu}$ requires a numerical procedure on the sphere $\bbS^{d-1}$. This falls within the domain of optimization on Riemannian manifolds \citep{absil_2008_OptimizationAlgorithmsMatrix}. Among various available techniques, we employ a first-order Riemannian gradient descent method, which has proven effective for computing weighted \Frechet means on spherical domains \citep{you_2022_ParameterEstimationModelbased}.

For simplicity, consider the minimization problem: 
\begin{equation}\label{eq-subproblem-frechet-mean}
	G(\bfmu) = \sum_{i=1}^n w_i d_{geo}^2 (\bfmu, \bfmu_i),
\end{equation}
where $d_{\mathrm{geo}}(\bfx, \bfy) = \arccos(\langle \bfx, \bfy \rangle)$ denotes the geodesic distance for  $\bfx, \bfy \in \bbS^{d-1}$. Analogous to Euclidean gradient descent, the Riemannian version iteratively updates the current estimate along the direction of steepest descent on the manifold. However, since the gradient lies in the tangent space, additional steps are required to map between the manifold and its tangent space. These are accomplished using the exponential and logarithmic maps. For $\bfx, \bfy \in \bbS^{d-1}$ and a tangent vector $\bfv \in T_{\bfx}\bbS^{d-1}$, the maps are defined as \begin{equation}\label{eq-sphere-exponential-logarithmic-maps}
	\Exp_{\bfx} (\bfv) = \cos (\| \bfv \|) \bfx + \frac{\sin \|\bfv\|}{\|\bfv\|}\bfv \quad \textrm{and}\quad \Log_{\bfx} (\bfy) = \frac{d_{geo} (\bfx,\bfy)}{\| \textrm{Proj}_{\bfx}(\bfy-\bfx)\|} \textrm{Proj}_{\bfx} (\bfy-\bfx),
\end{equation} where the projection operator $\mathrm{Proj}{\bfx}(\bfz) = \bfz - \langle \bfx, \bfz \rangle \bfx$ maps a vector $\bfz \in \bbR^d$ onto the tangent space $T{\bfx} \bbS^{d-1}$. Then, the Riemannian gradient descent proceeds by the following update rule given an initial starting point $\bfmu^{(0)} \in \bbS^{d-1}$:
\begin{equation}\label{eq-geodesic-mean-update}
	\bfmu^{(t+1)} \leftarrow \Exp_{\bfmu^{(t)}} \left(2 \alpha^{(t)} \sum_{i=1}^n w_i \Log_{\bfmu^{(t)}} (\bfmu_i)\right),
\end{equation}
where $\alpha^{(t)} > 0$ is the step size of gradient marching at iteration $t$. 

We close this section by referring to several algorthmic components in barycenter computation. Following the recommendations in \citet{you_2022_ParameterEstimationModelbased}, we initialize the procedure using the normalized weighted Euclidean average
\begin{equation*}
\bfmu^{(0)} = \frac{\sum_{i=1}^n w_i \bfmu_i}{\|\sum_{i=1}^n w_i \bfmu_i\|}.
\end{equation*}
This choice corresponds to the extrinsic mean, which in theory lies close to the intrinsic \Frechet mean under the support condition of Theorem~\ref{theorem-barycenter-uniqueness}. For the step size, we propose using a fixed value $\alpha^{(t)} = 0.25$, as suggested in \citep{hauberg_2018_DirectionalStatisticsSpherical}. This constant-step strategy avoids costly line search procedures, making it computationally efficient in high-dimensional or large-scale settings, albeit at the cost of potentially requiring more iterations for convergence. The stopping rule can be based on the change in iterates to terminate the algorithm when  $\| \bfmu^{(t+1)} - \bfmu^{(t)} \| < \epsilon$ for a small threshold $\epsilon > 0$. Since $\bbS^{d-1}$ is a complete Riemannian manifold, every Cauchy sequence converges to a point. Moreover, as iterates approach convergence, the geodesic distance between points on the manifold converges to their Euclidean distance asymptotically in the ambient space. Thus, measuring convergence in $\ell_2$ norm is almost consistent with using geodesic distance, and equivalent up to first order to standard stopping criteria based on gradient magnitude in manifold optimization \citep{boumal_2023_IntroductionOptimizationSmooth}.

%%%%%%%%%%%%%%%%%%%%%%%%%%%
\section{vMF Mixture Model Reduction}

%\begin{minipage}{0.45\linewidth}
%	\begin{algorithm}[H]
%		\caption{Algorithm 1}
%		\begin{algorithmic}[1]
%			\STATE Initialize something
%			\FOR{each iteration}
%			\STATE Do something
%			\ENDFOR
%			\STATE Return result
%		\end{algorithmic}
%	\end{algorithm}
%\end{minipage}
%\hfill
%\begin{minipage}{0.45\linewidth}
%	\begin{algorithm}[H]
%		\caption{Algorithm 2}
%		\begin{algorithmic}[1]
%			\STATE Another initialization
%			\WHILE{condition}
%			\STATE Do another thing
%			\ENDWHILE
%			\STATE Return another result
%		\end{algorithmic}
%	\end{algorithm}
%\end{minipage}

\subsection{Greedy method}

The greedy method for mixture model reduction is an iterative strategy that reduces the number of components by successively merging the most similar pairs. This approach relies on a well-defined dissimilarity measure to guide local decisions at each iteration, without explicitly considering the global structure of the mixture.

We propose to use the $\WL$ distance as a geometrically principled dissimilarity for constructing a greedy algorithm. At iteration $t$, the current vMF mixture consists of $S:=K-t+1$ components, represented as  $\bbQ^{(t)} = \sum_{k=1}^S \alpha_k \bbP_k$,
where $\alpha_k$'s are the mixture weights. The iteration begins with a \textsf{search} step to identify the pair of components with minimal discrepancy under the $\WL$ distance: 
\begin{equation}\label{eq-minimizing-index-distance}
	(i^*, j^*) = \underset{(i,j)}{\argmin} \, \WL(\bbP_i, \bbP_j) \,\, \textrm{such that } (i,j) \in [S]\times [S], ~ i\neq j.
\end{equation}
The subsequent \textsf{merge} step replaces the identified components $\bbP_{i^*}, \bbP_{j^*}$ with a single merged component $\bbP_{*}$, and updates the corresponding weights by setting $\alpha_{*} = \alpha_{i^*} + \alpha_{j^*}$, thus preserving the mixture property that weights sum to one. A natural choice for the merged component $\bbP_{*}$ is the barycenter of $\bbP_{i^*}$ and $\bbP_{j^*}$:
\begin{equation}\label{eq-barycenter-closest}
	\bbP_{*} = \underset{\bbP}{\argmin}\, \left[\tilde{\alpha}_{i} \WL^2 (\bbP, \bbP_{i^*}) 
	+
	\tilde{\alpha}_{j} \WL^2 (\bbP, \bbP_{j^*})
	\right].
\end{equation} 
Here, the normalized weights are defined as $\tilde{\alpha}_i = \alpha_{i^*}/(\alpha_{i^*} + \alpha_{j^*})$ and $\tilde{\alpha}_j = \alpha_{j^*}/(\alpha_{i^*} + \alpha_{j^*})$ so that $\tilde{\alpha}_i + \tilde{\alpha}_j = 1$. This form aligns exactly with the weighted barycenter formulation in Equation \eqref{barycenter-measure-form} and its parametric variant in Equation \eqref{barycenter-parameter-form}. All the other mixture components and weights remain unchanged in this update. The process is repeated until the total number of components is reduced to the desired target size $K' < K$.

\subsection{Partitional method}

An alternative to the greedy approach is the partitional method, which reduces the mixture model in a single pass by partitioning the components into groups and summarizing each group with a barycenter. Unlike the greedy strategy, which is iterative and local, the partitional method clusters all components at once, ignoring their weights during the clustering phase.

This approach is partly motivated by the pigeonhole principle: if a mixture contains more components than necessary, redundancy must exist, implying that a smaller number of well-chosen components can plausibly represent the same distribution.

The method begins by executing the \textsf{search} step in a single stage: partition the $K$ components into $K'$ clusters using a clustering algorithm, where $K' < K$ is the target number of components. We consider two clustering methods for this task. 
\begin{itemize}
	\item Hierarchical clustering with single linkage, which iteratively merges the closest pairs of components based on minimum pairwise dissimilarity \citep{johnson_1967_HierarchicalClusteringSchemes, gower_1969_MinimumSpanningTrees}. This method constructs a dendrogram whose cuts yield partitions for any desired number of clusters $K'$. The single linkage criterion aligns with ideas from topological data analysis (TDA) \citep{carlsson_2004_PersistenceBarcodesShapes, chazal_2021_IntroductionTopologicalData}, where a filtration parameter $\epsilon$ is used to form simplicial complexes and track merging of connected components, corresponding to 0-dimensional persistent homology.
	\item $k$-medoids clustering, which is a robust alternative to $k$-means that minimizes within-cluster dissimilarity using actual data points as representatives \citep{kaufman_1990_PartitioningMedoidsProgram}. It requires only pairwise distances and avoids computing cluster means or barycenters. Its robustness to noise and outliers makes it particularly suitable for modeling with mixture components \citep{huber_1981_RobustStatistics}.
\end{itemize}

Once the clustering is complete, we obtain a partition ${\calI_1, \ldots, \calI_{K'}}$ of the indices $[K]$, where each $\calI_k$ corresponds to a group of vMF components to be merged. The \textsf{merge} step then summarizes each cluster via a barycenter. Without loss of generality, suppose the $k$-th cluster consists of $n_k$ vMF distributions ${\bbP_1, \ldots, \bbP_{n_k}}$ with original weights ${\alpha_1, \ldots, \alpha_{n_k}}$. Then, the barycenter component representing this cluster is
\begin{equation*}
	\bbP_*^{(k)} = \underset{\bbP}{\argmin} \sum_{i=1}^{n_k} \tilde{\alpha}_i \WL^2 (\bbP, \bbP_i),
\end{equation*}
where the normalized weights are $\tilde{\alpha}_i = \frac{\alpha_i}{\sum_{j=1}^{n_k} \alpha_j}$ for $i \in [n_k]$. The total weight assigned to the cluster is the sum of its original weights, i.e., $\alpha^{(k)} = \sum_{i=1}^{n_k} \alpha_i$. That is, contribution of all components in the cluster is imposed onto the barycenter. This procedure is repeated for all clusters, resulting in a reduced mixture model characterized by weights and barycentric vMF components $\lbrace (\alpha^{(i)}, \bbP_*^{(i)})\rbrace$ for $k \in [K']$.

%%%%%%%%%%%%%%%%%%%%%%%%%%%
\section{Examples}

In this section, we demonstrate the distinctive properties of the $\WL$ distance as a metric on the space of non-degenerate vMF distributions. In particular, we illustrate its effectiveness as a discriminative tool and its utility in mixture model reduction through both simulated and real-world examples.

\subsection{Simulated Examples}

We begin with a simulation study designed to evaluate the capacity of the $\WL$ distance to distinguish between vMF distributions with varying parameters. Specifically, we construct four distinct types of vMF distributions by varying both their mean direction and concentration parameter.

In the case where $d = 2$, the unit circle $\bbS^1$ admits a one-to-one mapping with the angular coordinate $\theta$ under the polar coordinate system, with points represented as $[x, y] = [\cos(\theta), \sin(\theta)]$. We consider two directional regimes corresponding to ``north'' and ``south'' orientations. The directional parameters are sampled from uniform distributions: $\theta \sim \text{Uniform}(15\pi/8, 17\pi/8)$ for the north group and $\theta \sim \text{Uniform}(7\pi/8, 9\pi/8)$ for the south group. Similarly, we define two regimes for concentration by sampling $\kappa \sim \text{Uniform}(0.9, 1.1)$ to represent low-concentration with broad dispersion and $\kappa \sim \text{Uniform}(9.9, 10.1)$ for high-concentration, tight-clustering  vMF distributions. This results in a $2 \times 2$ factorial design, yielding four distinct classes of distributions: \textsf{north-low}, \textsf{north-high}, \textsf{south-low}, and \textsf{south-high}. Random samples from each of these four types are generated and visualized in Figure~\ref{fig:sim1_four_types}, which displays representative densities on $\bbS^1$ under each condition.

\begin{figure}[ht]
	\centering
	\begin{minipage}{0.24\textwidth}
		\centering
		\includegraphics[width=\textwidth]{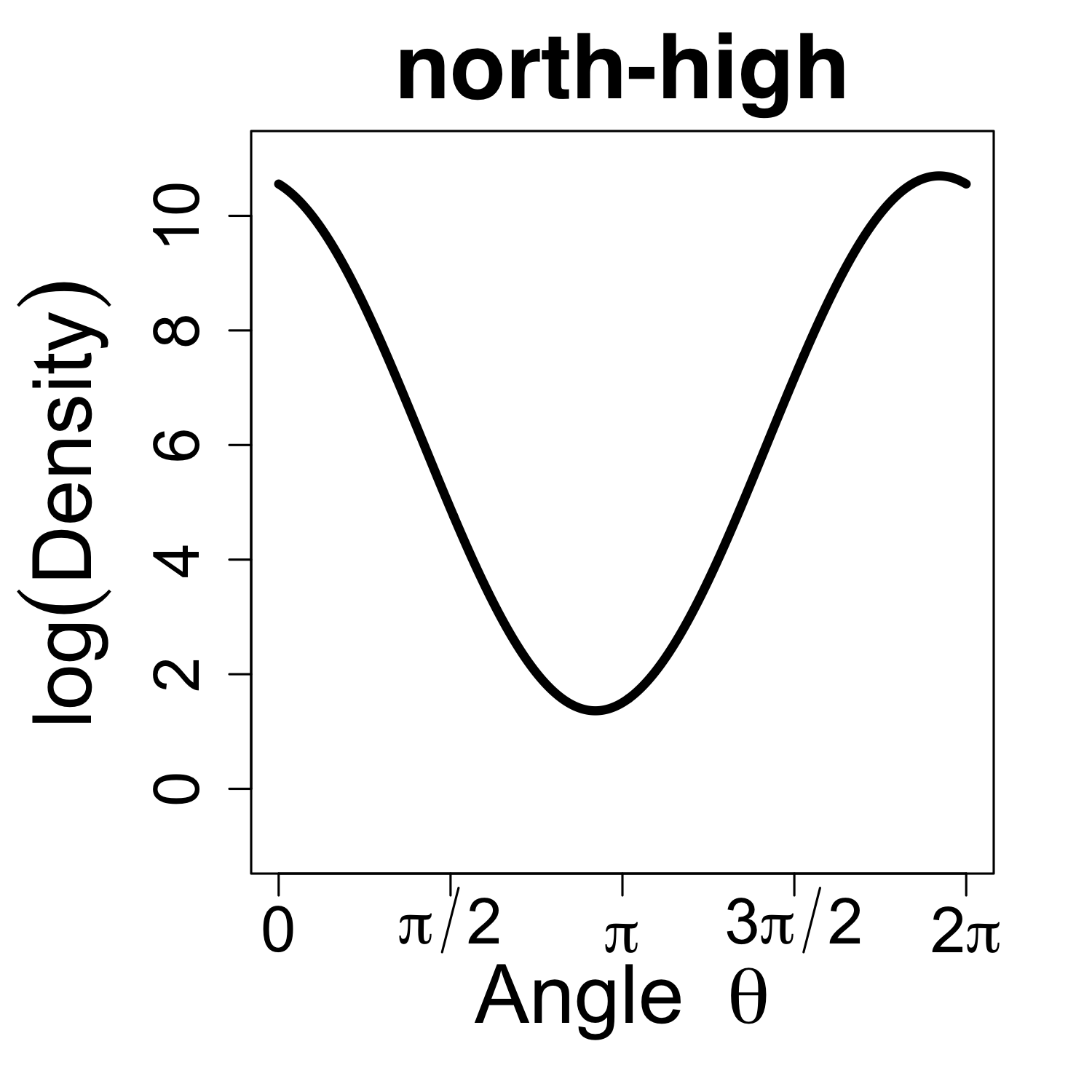}
	\end{minipage}
	\begin{minipage}{0.24\textwidth}
		\centering
		\includegraphics[width=\textwidth]{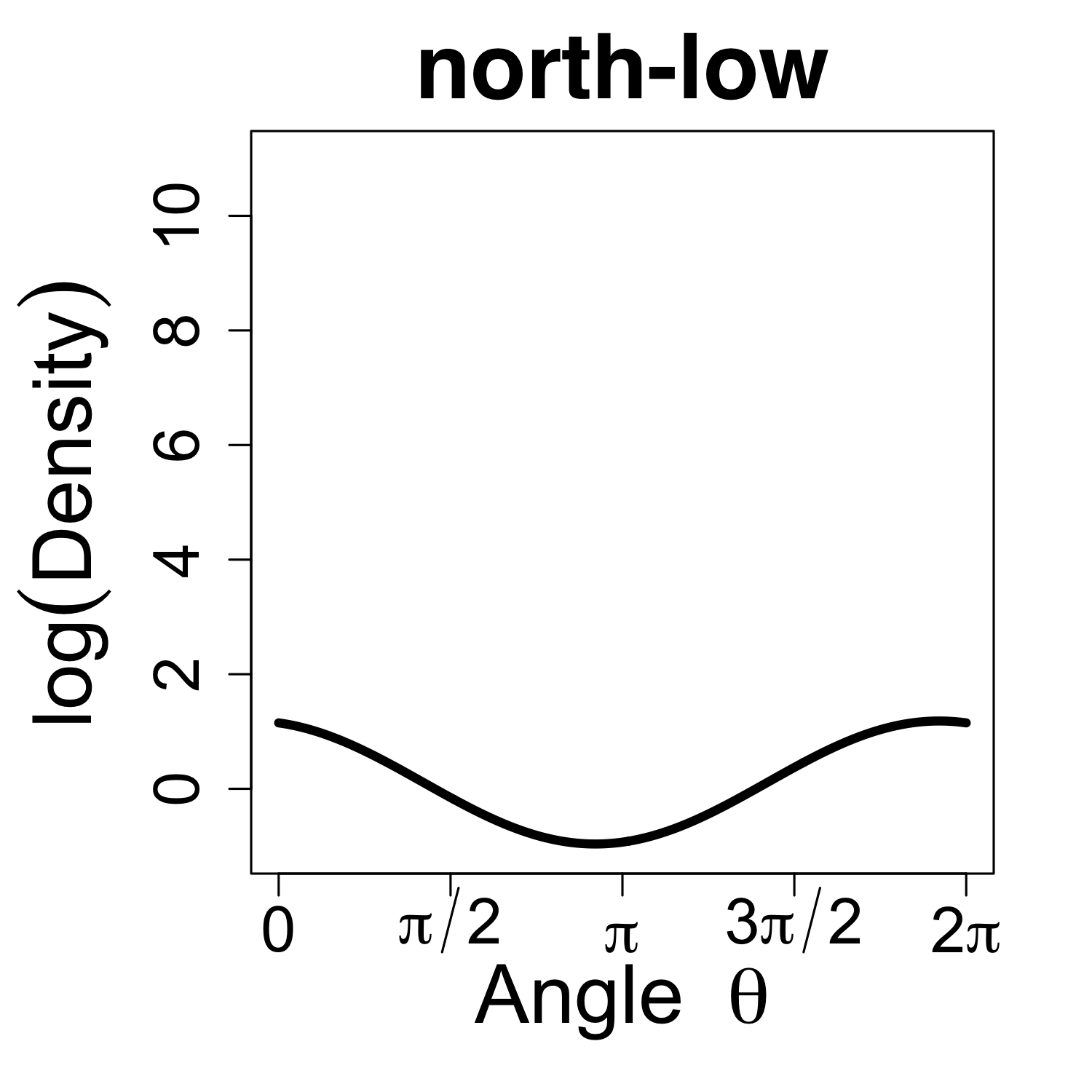}
	\end{minipage}
	\begin{minipage}{0.24\textwidth}
		\centering
		\includegraphics[width=\textwidth]{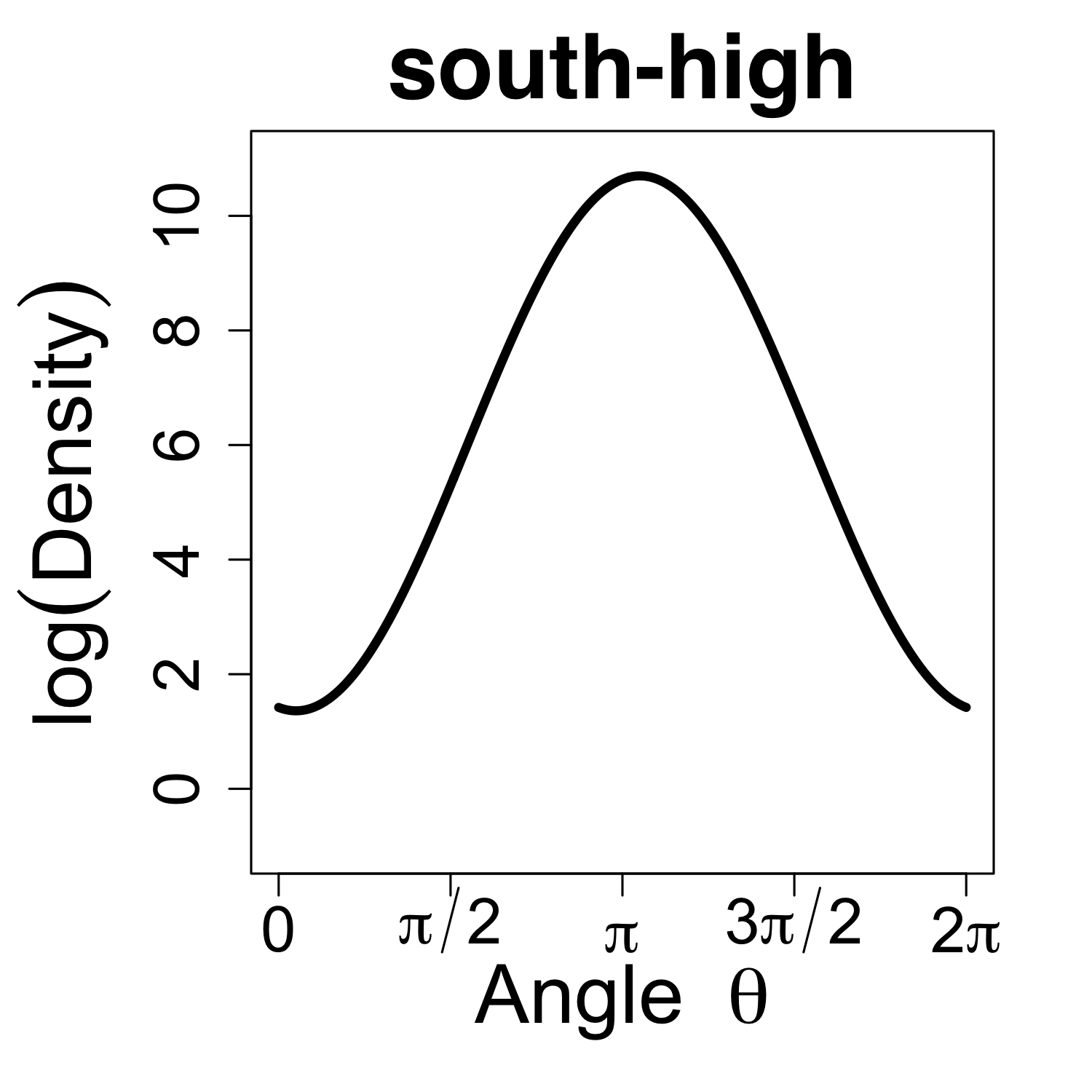}
	\end{minipage}
	\begin{minipage}{0.24\textwidth}
		\centering
		\includegraphics[width=\textwidth]{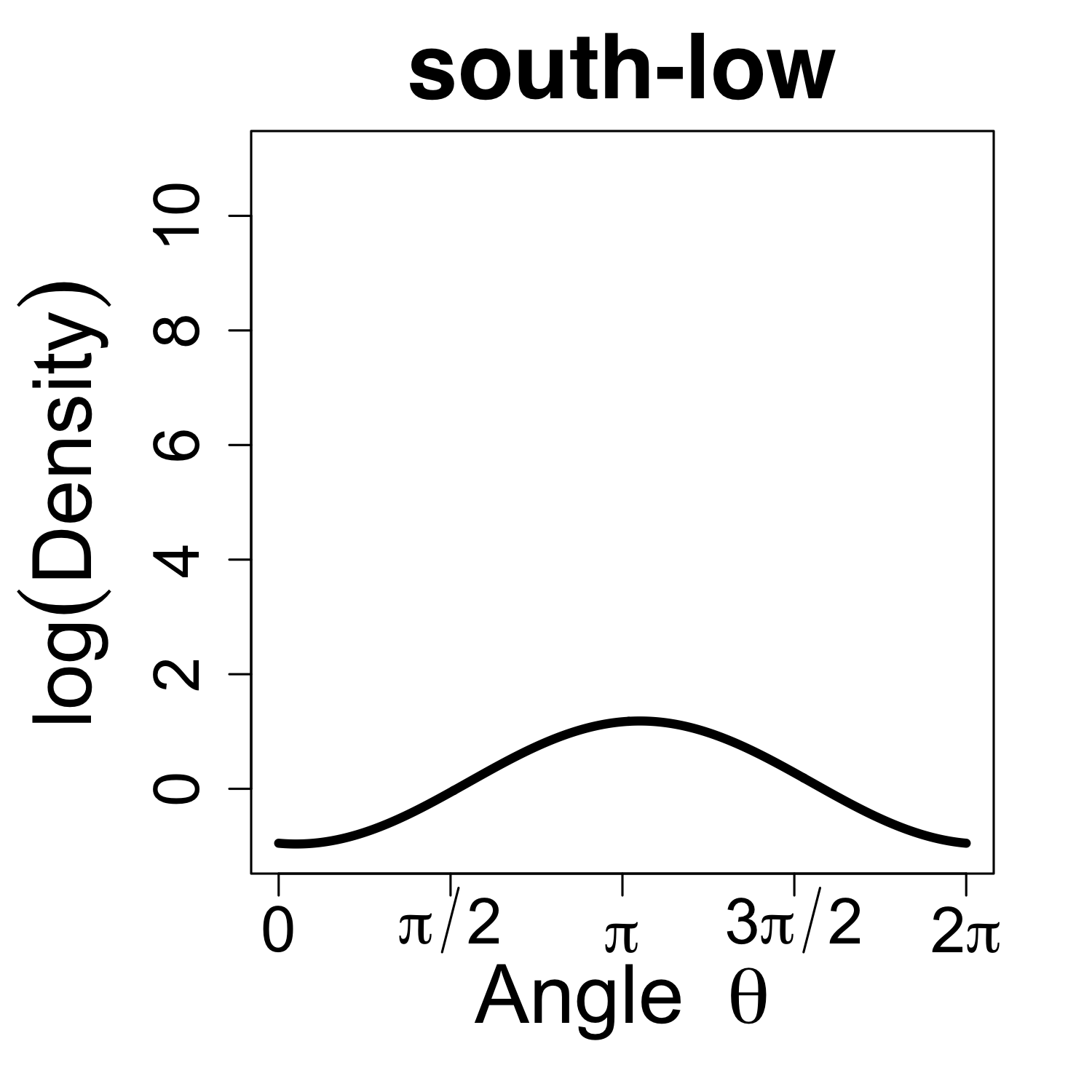}
	\end{minipage}
	\caption{Representative densities of four von Mises–Fisher (vMF) distribution types formed by combinations of perturbed mean directions and concentration parameters.}
	\label{fig:sim1_four_types}
\end{figure}

For each of the four distribution types, we generated a random sample of 100 vMF parameter pairs, resulting in a total of 400 distinct vMF distributions. We computed all pairwise dissimilarities using the proposed $\WL$ distance. For comparison, we also evaluated the standard $L_2$ distance between the corresponding density functions. Given two vMF distributions $\bbP_1$ and $\bbP_2$ with densities $f_1$ and $f_2$, the $L_2$ distance is defined as 
\begin{equation*}
	L_2 (\bbP_1, \bbP_2) = \left(\int_{\bbS^d} \left( f_1 (\bfx) - f_2 (\bfx) \right)^2 d\bfx\right)^{1/2}.
\end{equation*}
As this integral lacks a closed-form solution, we approximated it using Monte Carlo integration \citep{metropolis_1949_MonteCarloMethod}. Specifically, we drew $10^3$ samples uniformly from the unit hypersphere, with the sample size chosen empirically to ensure stable pairwise distance estimates. To visualize the dissimilarity structure, we applied multi-dimensional scaling (MDS) \citep{torgerson_1952_MultidimensionalScalingTheory} to each of the distance matrices, embedding the 400 vMF distributions into $\mathbb{R}^2$ for interpretation.

\begin{figure}[ht]
	\centering
	\begin{minipage}{0.325\textwidth}
		\centering
		\includegraphics[width=\textwidth]{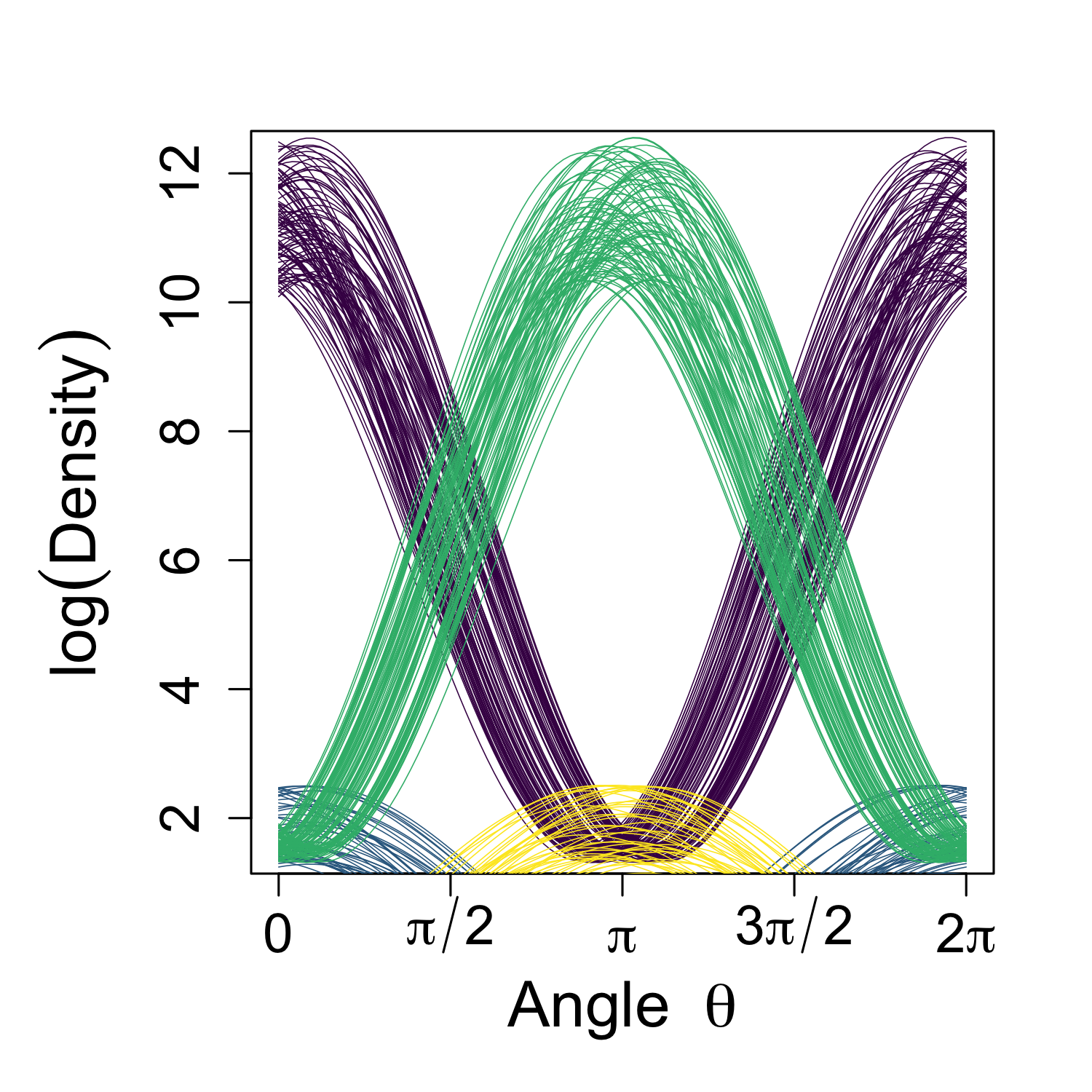}
	\end{minipage}
	\begin{minipage}{0.325\textwidth}
		\centering
		\includegraphics[width=\textwidth]{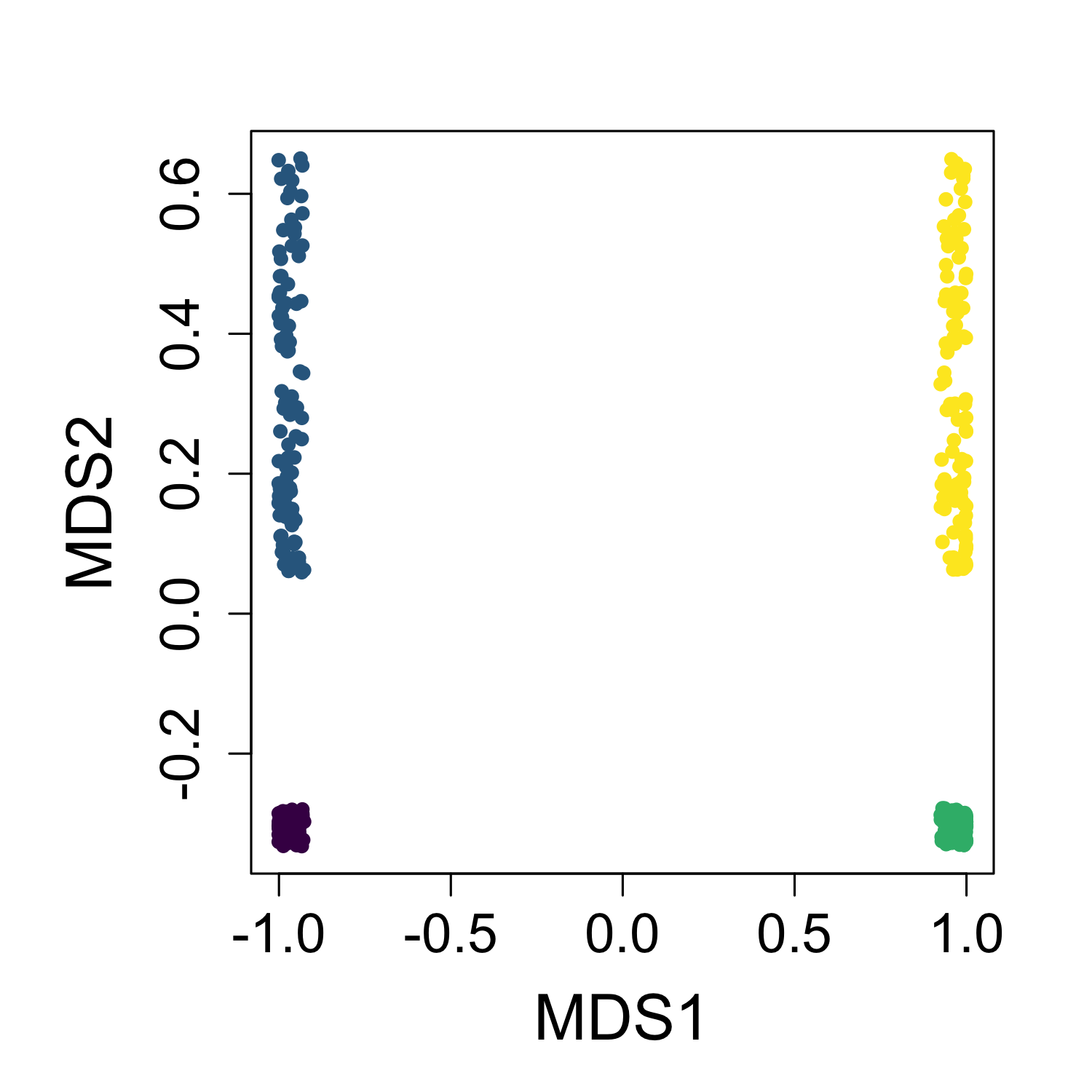}
	\end{minipage}
	\begin{minipage}{0.325\textwidth}
		\centering
		\includegraphics[width=\textwidth]{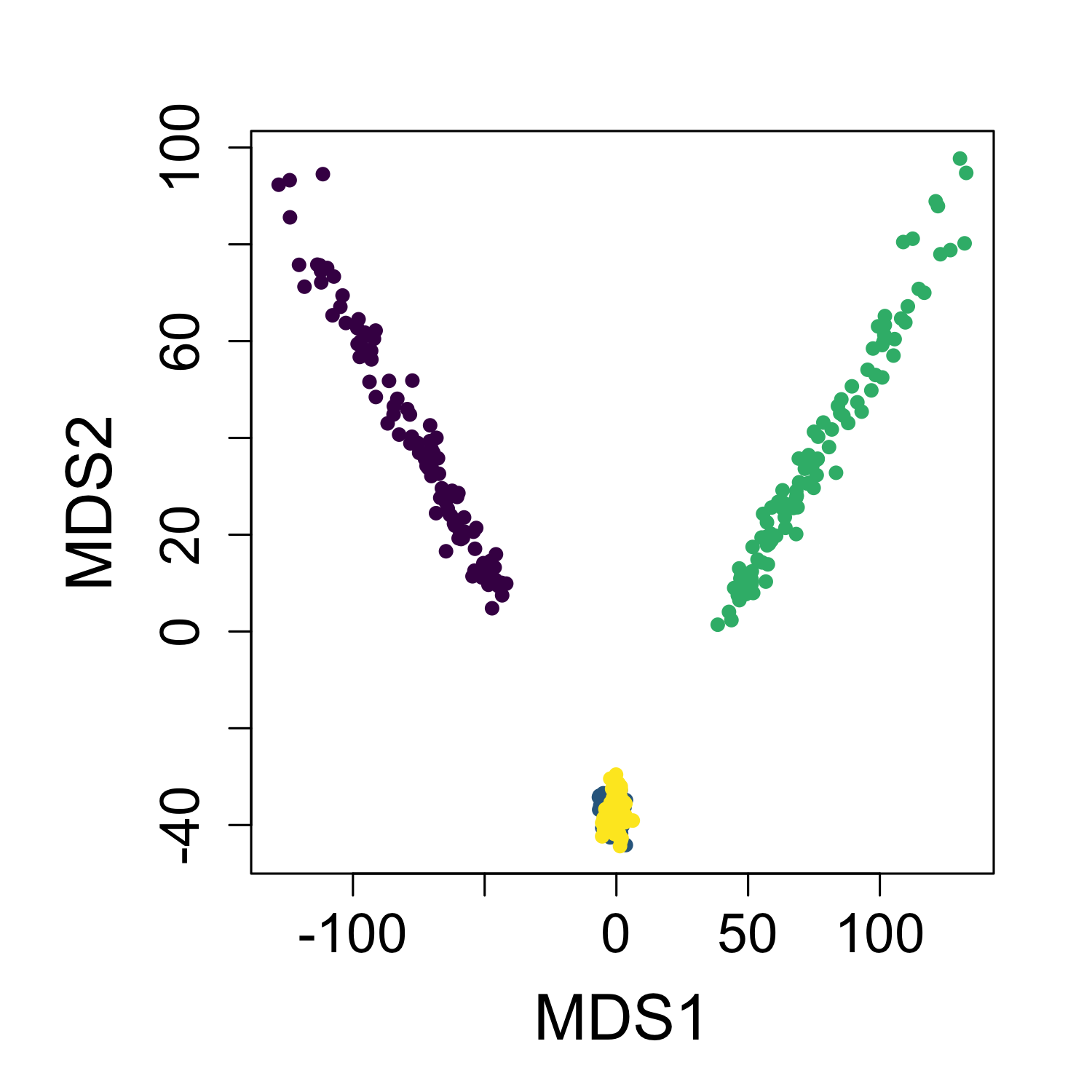}
	\end{minipage}
	%\caption{Visualization of 400 randomly generated von Mises-Fisher distributions (left) and 2-dimensional embeddings from  multidimensional scaling based on the $\WL$ distance (middle) and the standard $L_2$ distance (right). Colors represent 4 classes of distributions; \textsf{north-high} (\textcolor[HTML]{440154}{\textbf{purple}}),
%	\textsf{north-low} (\textcolor[HTML]{31688E}{\textbf{blue}}),
%	\textsf{south-high} (\textcolor[HTML]{35B779}{\textbf{green}}), and 
%	\textsf{south-low} (\textcolor[HTML]{FDE725}{\textbf{yellow}}).
%	}
	\caption{Visualization of 400 randomly generated vMF distributions (left) and two-dimensional embeddings obtained via multidimensional scaling using the proposed $\WL$ distance (middle) and the standard $L_2$ distance (right). Colors indicate distribution types: \textsf{north-high} (\textcolor[HTML]{440154}{\textbf{purple}}),
		\textsf{north-low} (\textcolor[HTML]{31688E}{\textbf{blue}}),
		\textsf{south-high} (\textcolor[HTML]{35B779}{\textbf{green}}), and 
		\textsf{south-low} (\textcolor[HTML]{FDE725}{\textbf{yellow}}).}
	\label{fig:sim1_mds}
\end{figure}

The visualization of the 400 randomly generated vMF distributions and their two-dimensional embeddings is presented in Figure~\ref{fig:sim1_mds}. It is expected that a meaningful dissimilarity metric should ideally separate distributions that differ in either location or concentration. This behavior is clearly observed under the $\WL$ distance that all four distribution types form well-separated clusters, reflecting their underlying parameter differences. On the other hand, the embedding derived from the standard $L_2$ distance fails to distinguish between the \textsf{north-low} and \textsf{south-low} groups, which are merged into a single cluster. This failure can be attributed to the low concentration parameter $\kappa$ in these groups, which leads to broad dispersion around the mean direction. As $\kappa \to 0$, the vMF distribution approaches the uniform distribution on the sphere, and the effect of the mean direction becomes negligible. Consequently, the $L_2$ distance, which emphasizes pointwise differences in density, becomes less sensitive to directional variation in low-concentration regimes. By contrast, the $\WL$ distance explicitly incorporates both the angular separation of mean directions and the difference in concentration. This dual sensitivity enables it to successfully differentiate distributions across both directional and dispersion axes, thereby capturing the underlying geometric and probabilistic structure of the vMF family more effectively.

Our second simulated example illustrates the effectiveness of the proposed mixture reduction methods. We consider a 4-component vMF mixture model on $\bbS^1$ with equal weights, defined as 
\begin{equation*}
	\bbP_{\text{mix}} = \frac{1}{4} \sum_{i=1}^4 vMF(\bfmu_i, \kappa),
\end{equation*}
where the concentration parameter is fixed at $\kappa = 10$, and the mean directions $\bfmu_i$ are placed at the principal axes of the circle: $(1,0)$, $(0,1)$, $(-1,0)$, and $(0,-1)$. A random sample of size 400 was drawn from this model, with approximately 100 observations per component. Figure~\ref{fig:sim2_reduce_mix} (left) shows both the mixture density and the resulting sample, highlighting the four distinct, minimally overlapping modes.

\begin{figure}[ht]
	\centering
	\includegraphics[width=.9\linewidth]{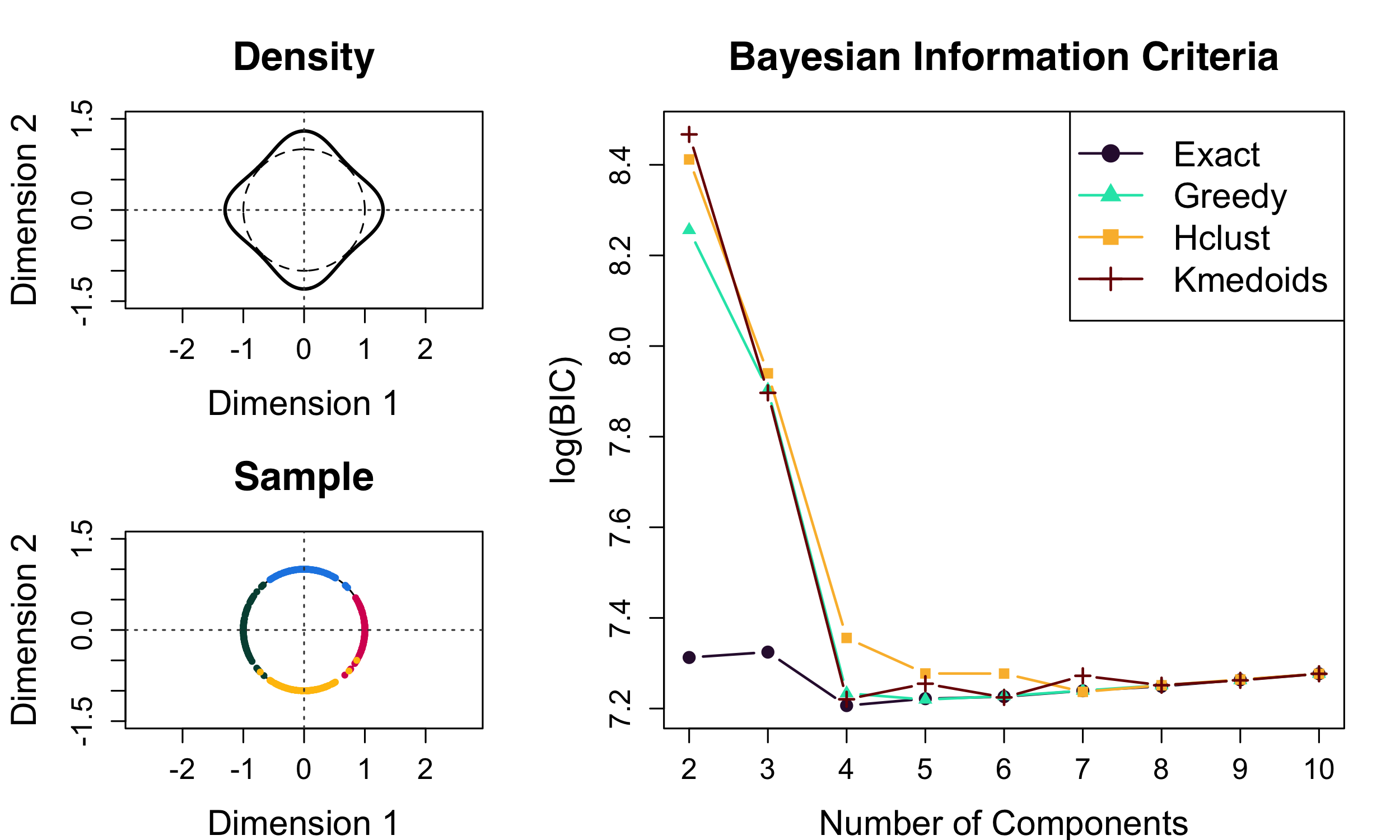}
	\caption{Simulated example of mixture model reduction. Left-top: density of the ground-truth 4-component vMF mixture model. Left-bottom: 400 randomly generated samples, color-coded by component membership. Right: log-transformed BIC values for independently fitted mixtures and reduced models using greedy and partitional methods across varying numbers of components.}
	\label{fig:sim2_reduce_mix}
\end{figure}

To evaluate the performance of our vMF mixture reduction algorithms, we first fitted finite mixture models for varying numbers of components $K \in {2, \ldots, 10}$ independently. Each fit was repeated 10 times with different initializations, and the model with the highest likelihood was retained. The best-fitting 10-component model served as the initialization for both the greedy and partitional reduction procedures, which were applied iteratively to compress the model down to 2 components. 

As an evaluation metric, we computed the Bayesian Information Criterion (BIC) \citep{schwarz_1978_EstimatingDimensionModel}, which for a vMF mixture with $k$ components in dimension $d$ is given by 
\begin{equation*}
	\textrm{BIC} = -2 \log (\hat{L}) + (k(d+1)-1)\log n,
\end{equation*}
where $\hat{L}$ denotes the maximized likelihood and $n$ is the sample size. The BIC values are summarized in Figure~\ref{fig:sim2_reduce_mix} (right). All methods, except for the partitional method with hierarchical clustering, achieved the lowest BIC at $K = 4$, which corresponds to the true number of components. Notably, as the number of components decreases below 4, the BIC values of the reduced models increase sharply, often at a much steeper rate than those of independently fitted models. This divergence highlights the utility of the proposed reduction techniques that they preserve model structure effectively until reaching the intrinsic complexity of the data, at which point any further compression results in clear degradation.

This behavior offers an interesting perspective on model selection. The rapid increase in BIC below the true model size may serve as a form of negative feedback that discourages under-specification. For example, the greedy method correctly recovers all four true component locations at $K=4$, but further merging leads to configurations with imbalanced components - one large and two small - which distort the data-generating structure. In contrast, fitting a 3-component mixture from scratch often yields an artificial partition of the circle into three evenly spaced segments to distribute mass uniformly. This contrast illustrates how our reduction framework is not only computationally efficient but also intrinsically penalizes structural misspecification.

\subsection{Real Example 1 : Abstract Classification}

Our first real-data experiment focuses on text classification using pretrained sentence embedding models \citep{patil_2023_SurveyTextRepresentation}. We employ the medical abstract dataset from \citet{schopf_2022_EvaluatingUnsupervisedText}, originally designed for zero-shot and similarity-based classification tasks. The corpus consists of 14,438 abstracts, each labeled into one of five disease categories: neoplasms (3163), digestive system diseases (1494), nervous system diseases (1925), cardiovascular diseases (3051), and general pathological conditions (4805). Although the original dataset includes a predefined train/test split, we combine the full corpus to facilitate custom sampling for our analysis.

A standard pipeline in text-based statistical learning is to embed documents into vector space representations prior to downstream modeling. While this approach is often applied at the phrase or sentence level, modern embedding models are capable of handling paragraph-level inputs as well \citep{le_2014_DistributedRepresentationsSentences}. However, given that abstracts typically comprise multiple thematically distinct sentences covering background, objectives, and contributions, it is more natural to model an abstract as a bag of semantically diverse yet coherent sentences \citep{harris_1954_DistributionalStructure}.

This motivates our novel representation pipeline: each abstract is split into sentences, each sentence is embedded using a pre-trained model, and the resulting vectors are $\ell_2$-normalized and modeled as a sample from a vMF  distribution. This provides a compact, distributional representation of the abstract via a unit-norm mean direction and scalar concentration parameter. To our knowledge, this approach has not been explicitly explored in the literature. We use the \textsf{mpnet-base-v2} model, a pretrained sentence embedding model  based on the \textsf{mpnet} architecture \citep{song_2020_MPNetMaskedPermuted}, which is available from the \textsf{Sentence-Transformers} library \citep{reimers_2019_SentenceBERTSentenceEmbeddings}. It encodes sentences into 768-dimensional vectors. On average, each abstract contains 8.59 sentences with standard deviation of 3.26, ranging from 2 to 27. Although the ambient dimension exceeds the typical number of sentences per abstract, the vMF model’s use of a single scalar concentration parameter makes it robust in this low-sample regime.

\begin{figure}[ht]
	\centering
	\includegraphics[width=.95\linewidth]{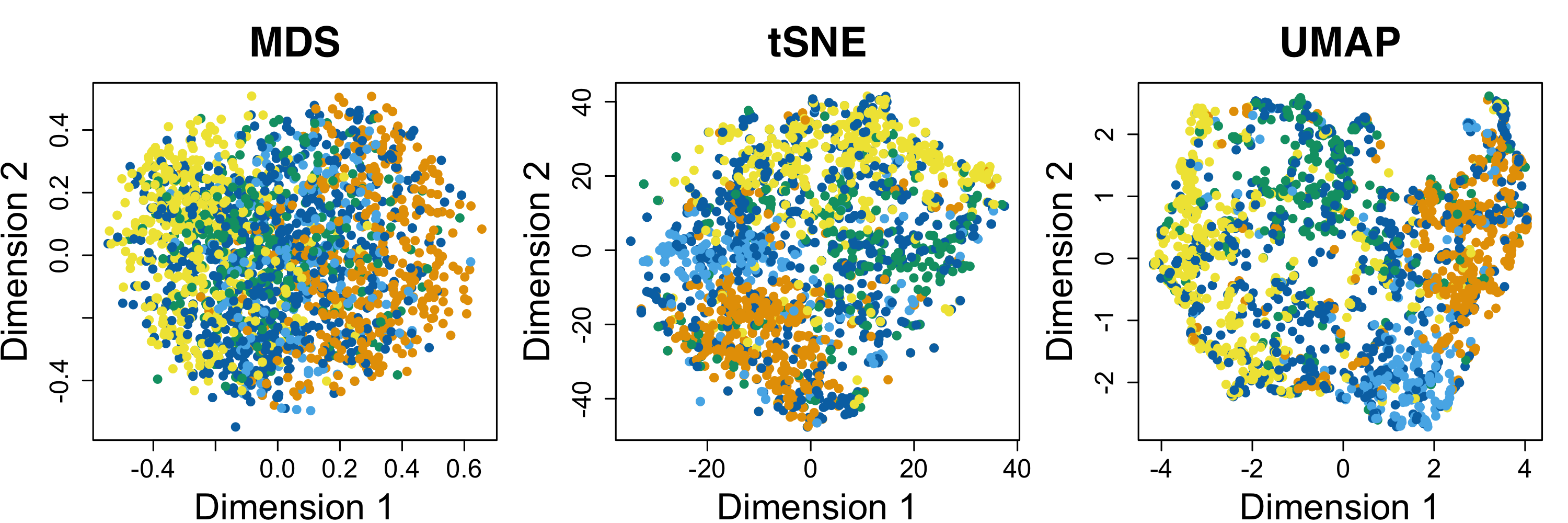}
	\caption{Two-dimensional embeddings of 1000 abstracts represented by vMF distributions over sentence embeddings. Embedding techniques include multidimensional scaling (MDS), t-stochastic neighbor embedding (t-SNE), and uniform manifold approximation and projection (UMAP). Colors correspond to the five ground-truth disease categories.}
	\label{fig:real1_med_embedding}
\end{figure}

As an exploratory analysis, we randomly selected 1,000 abstracts and computed pairwise dissimilarities between their vMF representations using the proposed $\WL$ distance. The resulting distance matrix was projected to $\mathbb{R}^2$ using three nonlinear embedding algorithms: MDS, t-SNE \citep{vandermaaten_2008_VisualizingDataUsing}, and UMAP \citep{mcinnes_2020_UMAPUniformManifold}. Figure~\ref{fig:real1_med_embedding} shows the resulting embeddings, which  reveal only mild separation across categories, with substantial overlap between clusters. This entanglement reflects genuine semantic and clinical proximity among disease classes. Many biomedical abstracts reference multiple physiological systems or describe comorbid conditions, contributing to blurred boundaries in the latent space. Thus, the lack of clear separation is not necessarily a failure of the embedding model or metric, but rather a feature of the domain itself.

We then conducted a classification task using the same vMF representation pipeline. From the full dataset, we randomly sampled 2000 abstracts for training and another 2000 for testing. Each abstract was embedded as a vMF distribution over its sentence vectors, and classification was performed using $k$-nearest neighbors (KNN) for $k \in {1,\ldots,10}$. Distances between vMF distributions were computed using both the $\WL$ and $L_2$ metrics. The latter was estimated via Monte Carlo approximation with adaptive sample size (terminated when incremental change fell below $10^{-6}$). As a comparison, we also evaluated a vector-based representation via mean-pooling sentence embeddings followed by $\ell_2$ normalization, with cosine similarity used for KNN.

To benchmark performance, we implemented three baseline feature extraction methods: bag-of-words (BoW), term frequency-inverse document frequency (TF-IDF), and Word2Vec \citep{mikolov_2013_EfficientEstimationWord}. Preprocessing steps, including lowercasing, removal of punctuation, and stemming, were applied to the texts, followed by tokenization  using the \textsf{text2vec} package \citep{selivanov_2023_Text2vecModernText} in \textsc{R}, and embedded accordingly. BoW and TF-IDF produced sparse frequency-based representations; for Word2Vec, we trained a skip-gram model via the \textsf{word2vec} package \citep{wijffels_2023_Word2vecDistributedRepresentations} and averaged word embeddings for each document. All feature sets were used to train XGBoost classifiers \citep{chen_2016_XGBoostScalableTree} with consistent hyperparameters across methods, including maximum depth 6, learning rate $\eta=0.1$, and subsampling rates of 0.8 for rows and columns. Model selection was done via 5-fold cross-validation with early stopping based on multiclass log loss. The learned vectorization objects and XGBoost models were applied to the held-out test data for final evaluation, and predicted class indices were mapped back to the original label levels for interpretability. All experiments were repeated five times; mean performance is summarized in Figure~\ref{fig:real1_med_report} using four evaluation metrics: precision, recall, F1 score, and specificity, reported as weighted averages to account for class imbalance.

\begin{figure}[ht]
	\centering
	\includegraphics[width=.95\linewidth]{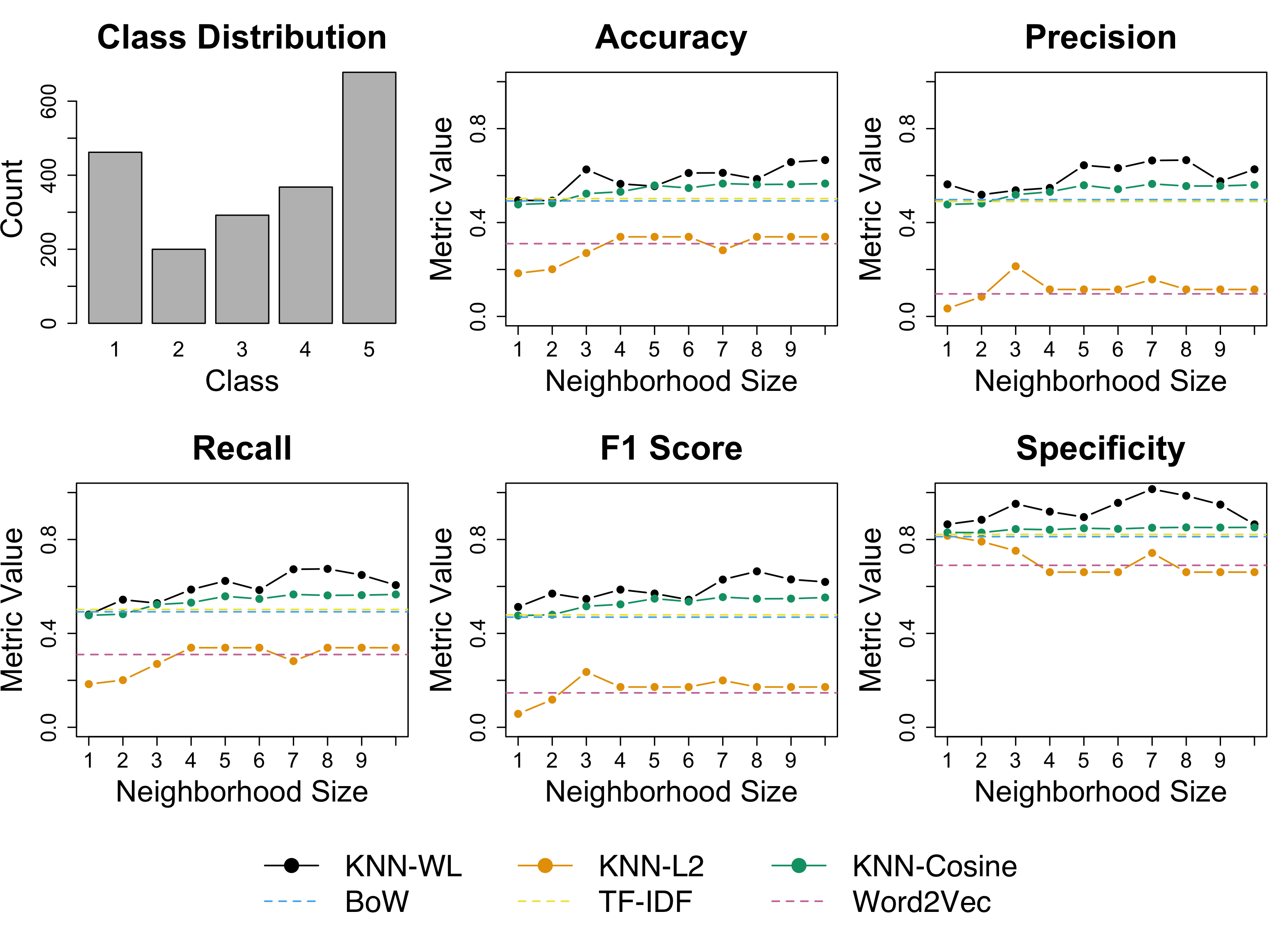}
	\caption{Classification performance comparison for multiple methods. Metrics include weighted averages of precision, recall, F1 score, and specificity. Methods include KNN classifiers using $\WL$, $L_2$, and cosine distances, and XGBoost classifiers using BoW, TF-IDF, and Word2Vec features.}
	\label{fig:real1_med_report}
\end{figure}

The results show that KNN with $\WL$ distance consistently outperforms all other methods, including KNN with $L_2$ distance and cosine similarity. The $L_2$-based vMF classification performed notably worse, reflecting its inadequacy in capturing concentration-based differences between distributions. Even the cosine similarity model, which performed reasonably well, was surpassed in most cases by the $\WL$-based approach. Among the baselines, Word2Vec performed poorest, with accuracy comparable to $L_2$-based KNN. Both BoW and TF-IDF achieved similar levels of performance—slightly below cosine similarity—despite using XGBoost classifiers and cross-validation. Notably, this performance came at greater computational cost, highlighting the efficiency of the $\WL$-based KNN pipeline. Finally, we note the near-identical performance of BoW and TF-IDF. This is consistent with the fact that they share the same tokenization pipeline, and XGBoost is known to be invariant to monotonic transformations like IDF scaling. In low-data settings with high-dimensional sparse features, IDF reweighting has limited benefit. In conclusion, the vMF-based representation combined with the $\WL$ distance offers a robust and scalable alternative to standard text classification pipelines. Its structural sensitivity to both directionality and concentration makes it especially effective in modeling abstract-level semantic variation.

\subsection{Real Example 2 : Image Clustering}

Our second real-data experiment involves model-based clustering of image data via vMF distributions on the unit hypersphere. Specifically, we use deep visual representations obtained from a pretrained Vision Transformer (ViT) model, with the CIFAR-10 dataset serving as the primary benchmark. CIFAR-10 is a widely used dataset in computer vision, consisting of 60,000 low-resolution color images evenly distributed across ten classes: airplane, automobile, bird, cat, deer, dog, frog, horse, ship, and truck \citep{krizhevsky_2009_LearningMultipleLayers}. Each image is of size 32×32 pixels, making the dataset particularly suitable for evaluating representation learning and clustering under constrained visual resolution.

To ensure compatibility with the ViT architecture, images were resized to 224×224 pixels. We used the \textsf{google/vit-base-patch16-224} model \citep{dosovitskiy_2021_ImageWorth16x16}, publicly available via the Hugging Face \textsf{Transformers} library . This model processes each image by dividing it into a grid of 16×16 non-overlapping patches with 196 total, which are linearly projected into a high-dimensional embedding space. A learnable \textsf{[CLS]} token is prepended to aggregate global information, and positional embeddings are added to encode spatial relationships. The resulting sequence is passed through multiple Transformer encoder blocks. The output corresponding to the \textsf{[CLS]} token yields a 768-dimensional vector that serves as a compact and semantically rich embedding of the entire image.

\begin{figure}[ht]
	\centering
	\includegraphics[width=.925\linewidth]{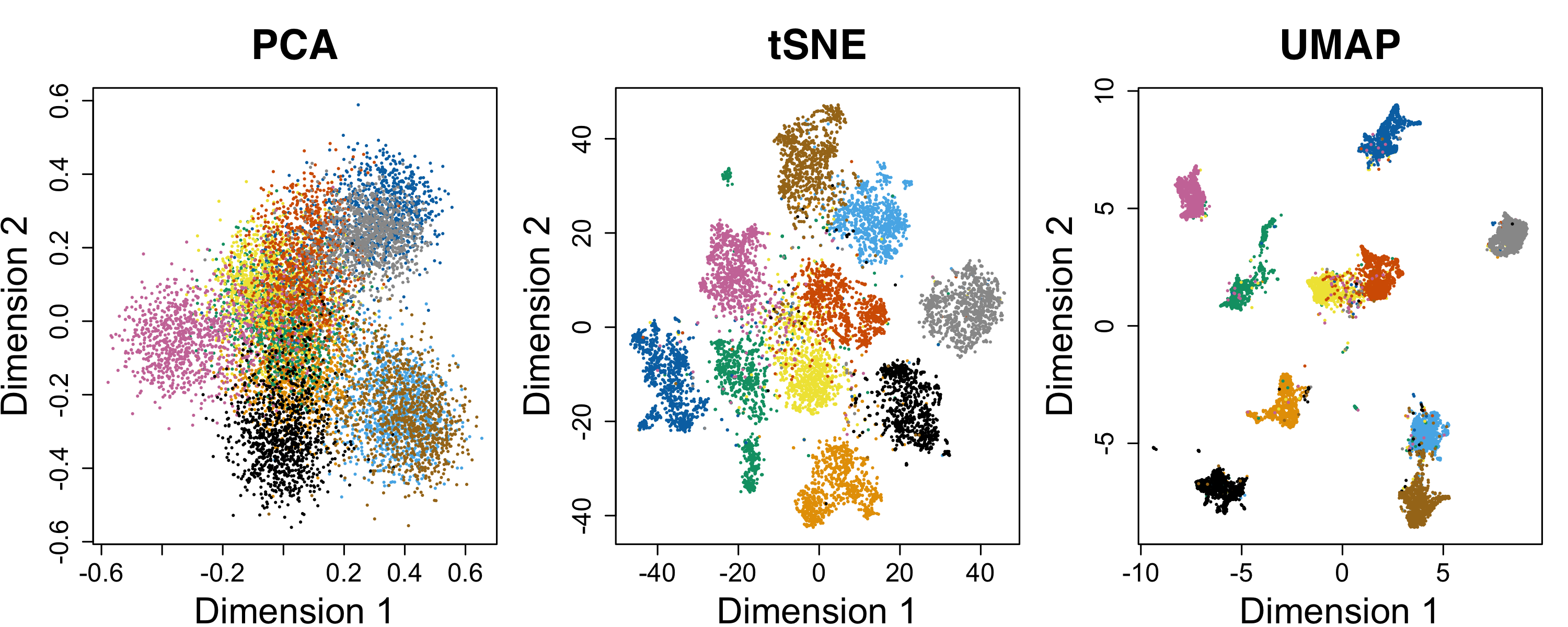}
	\caption{Two-dimensional embeddings of CIFAR-10 image embeddings produced by ViT and normalized to lie on the unit hypersphere. Projections were obtained via principal component analysis (PCA), t-stochastic neighbor embedding (tSNE), and uniform manifold approximation and projection (UMAP). Colors indicate true class labels.}
	\label{fig:real2_cifar_embedding}
\end{figure}

We randomly selected 10000 images from CIFAR-10 and applied the above embedding pipeline, including $\ell_2$ normalization, to produce a dataset in $\mathbb{R}^{10000 \times 768}$. This preprocessing ensured that the data lay on the unit hypersphere, enabling direct modeling with vMF distributions. As shown in Figure~\ref{fig:real2_cifar_embedding}, class separation is visually apparent in the projected embeddings obtained via PCA, t-SNE, and UMAP. While PCA shows some overlap, t-SNE and UMAP offer more discernible cluster boundaries, supporting the use of geometric methods in subsequent modeling.

We then fit a vMF mixture model with $K=20$ components, twice the true number of CIFAR-10 classes. This model was fit independently five times to mitigate initialization sensitivity, and the run with the lowest Bayesian Information Criterion (BIC) was retained as the reference mixture. From this reference, we applied the three proposed model reduction techniques - greedy merging, hierarchical clustering, and $k$-medoids - to reduce the number of components from $K=19$ to $K=2$. To evaluate the effectiveness of these reduction techniques, we also fit vMF mixture models independently for each $K \in {2, \dots, 20}$, selecting the best run by BIC in each case. For benchmarking, we applied standard clustering methods as well, including $k$-means \citep{macqueen_1967_MethodsClassificationAnalysis} and spherical $k$-means \citep{dhillon_2001_ConceptDecompositionsLarge}, to assess cluster recovery independent of probabilistic modeling.

\begin{figure}[ht]
	\centering
	\includegraphics[width=.925\linewidth]{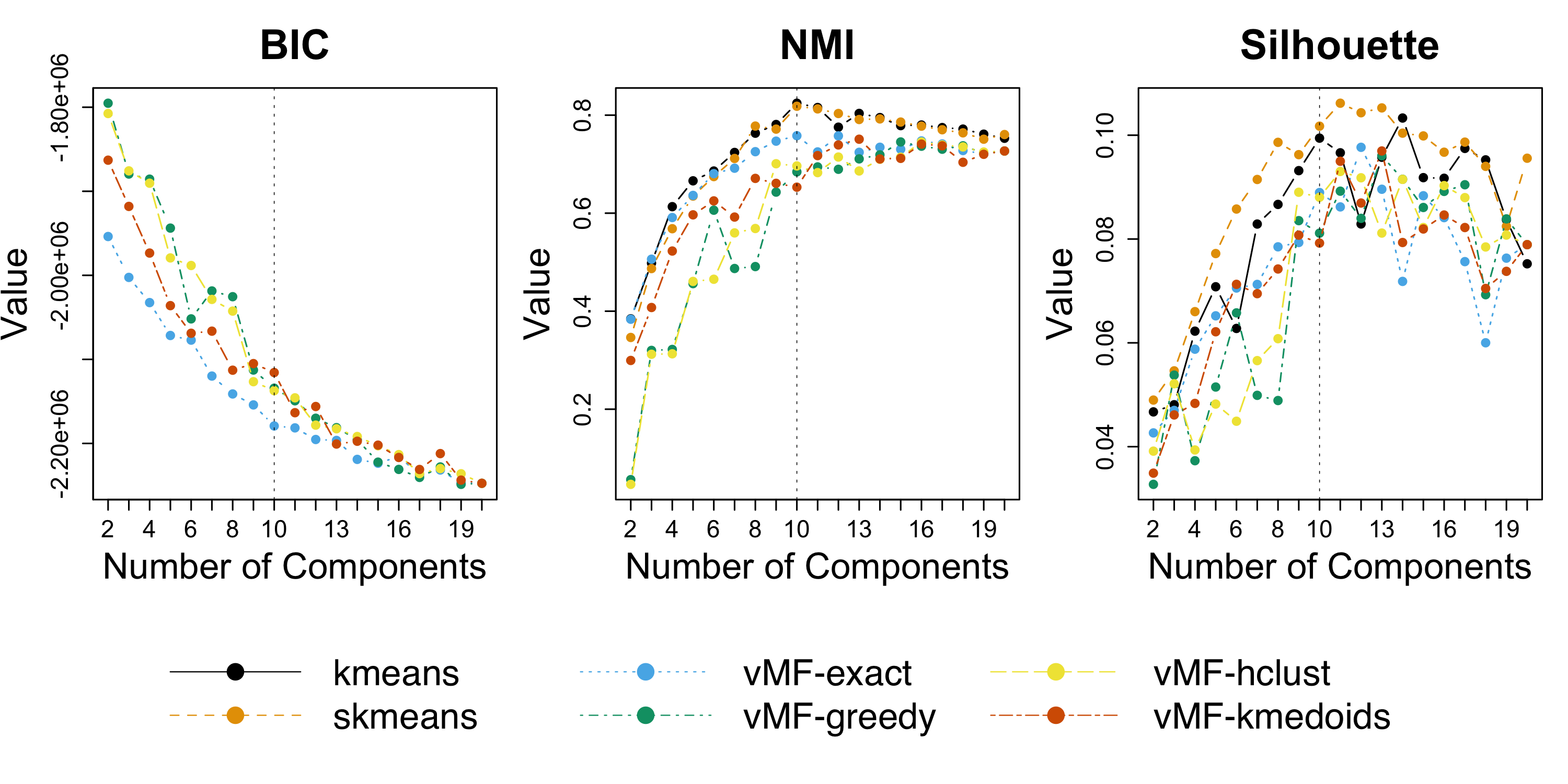}
	\caption{Comparison of model fit and clustering quality for various methods. Metrics include Bayesian Information Criterion (BIC), normalized mutual information (NMI), and Silhouette score. Vertical dashed line indicates the true number of CIFAR-10 classes.}
	\label{fig:real2_cifar_report}
\end{figure}

Figure~\ref{fig:real2_cifar_report} summarizes the performance of all methods. For model fit, the vMF mixtures trained independently achieve the lowest BIC values across all $K$, as expected. However, the gap between exact and reduced models narrows for $K \geq 10$, indicating diminishing returns from exact inference beyond the true cluster size. Below $K=10$, all reduction methods exhibit a sharp rise in BIC, echoing patterns observed in our simulated examples and indicating a loss of model fidelity when overcompressing the mixture.

In terms of clustering performance, both $k$-means and spherical $k$-means outperform all vMF-based methods in normalized mutual information (NMI), particularly near the true number of clusters ($K = 10$). This result is not unexpected, as the isotropic dispersion assumption inherent in vMF models can be overly restrictive in high-dimensional settings and may fail to capture the anisotropic structure present in deep embeddings, as illustrated in Figure~\ref{fig:real2_cifar_embedding}. Nevertheless, the reduced vMF models perform comparably to the exact vMF mixtures for $K \geq 10$, suggesting that our reduction strategies preserve much of the representational capacity of the original mixture model.

Silhouette scores provide a complementary internal evaluation that does not rely on ground truth class labels. Spherical $k$-means achieves the highest Silhouette scores, indicating compact and well-separated clusters. Among vMF-based methods, no consistent dominance is observed; in fact, the exact vMF mixture model often yields the lowest Silhouette scores across various values of $K$. Notably, all reported Silhouette scores for vMF-based methods lie within a narrow range of $[0.02, 0.11]$, far from the ideal value of 1.0. One possible explanation for this phenomenon is the limited expressiveness of the pretrained ViT model, which, while capable of recovering class structure to some extent as evidenced by NMI, may lack sufficient task-specific refinement to induce strongly separated decision boundaries in the embedding space. This limitation is consistent with the known behavior of ViT models, which often benefit from fine-tuning on large task-relevant corpora to improve semantic specificity and downstream performance.

In summary, while $k$-means and spherical $k$-means offer the strongest clustering performance, our vMF reduction framework provides a principled and scalable alternative for modeling spherical data distributions. Although the isotropic assumption imposes limitations, the geometry-aware reduction methods retain fidelity to the original model and offer interpretable compression paths. This underscores the utility of the $\WL$-based geometry in clustering and mixture model simplification.

\section{Conclusion}

In this paper, we introduced a novel geometry-inspired dissimilarity metric for vMF distributions, grounded in high-concentration approximations and optimal transport theory. By interpreting the vMF distribution through a Gaussian lens on the tangent space of the unit hypersphere, we derived a closed-form expression for a Wasserstein-like distance, denoted $\WL$, which cleanly separates into angular and dispersion-based components. This formulation not only respects the intrinsic curvature of spherical domains but also yields a tractable and interpretable tool for comparing directional distributions.

The proposed distance equips the space of non-degenerate vMF laws with a meaningful geometric structure, enabling principled extensions of classical statistical operations such as barycenter computation. We leveraged this structure to develop two model reduction strategies, greedy and partitional, that compress large mixtures of vMF components while thriving to preserve underlying structure. Our experimental evaluation, encompassing both synthetic data and high-dimensional real-world embeddings, demonstrates that $\WL$ consistently improves on standard metrics like $L_2$ distance in classification, clustering, and model selection tasks. Notably, the $\WL$-based methods offer both theoretical soundness and computational efficiency in scenarios where existing approaches rely on approximation or struggle with scalability.

This work provides a foundation for several lines of future inquiry. One direction is a deeper theoretical investigation into the relationship between the proposed $\WL$ metric and the exact Wasserstein distance for vMF distributions. Another is the extension of our approach to broader families of directional distributions, especially those exhibiting anisotropy or more complex dependence structures. On the algorithmic side, the integration of $\WL$ into learning frameworks such as Riemannian variational inference, spherical generative models, or manifold-aware neural architectures could significantly enhance performance in geometric learning tasks. Ultimately, we believe the tools developed here contribute to a growing statistical and algorithmic framework for inference on curved spaces, with applications ranging from natural language and computer vision to biomedicine and beyond.

\bibliographystyle{dcu}
\bibliography{references}

@book{abramowitz_1972_HandbookMathematicalFunctions,
  title = {Handbook of {{Mathematical Functions}}: With {{Formulas}}, {{Graphs}}, and {{Mathematical Tables}}},
  shorttitle = {Handbook of Mathematical Functions},
  author = {Abramowitz, Milton and Stegun, Irene A.},
  year = {1972},
  series = {Dover Books on Advanced Mathematics},
  edition = {Unabridged, unaltered and corr. republ. of the 1964 ed},
  publisher = {Dover publ},
  address = {New York},
  collaborator = {{Conference on mathematical tables} and {National science foundation} and {Massachusetts institute of technology}},
  isbn = {978-0-486-61272-0},
  langid = {english},
  lccn = {515}
}

@book{absil_2008_OptimizationAlgorithmsMatrix,
  title = {Optimization Algorithms on Matrix Manifolds},
  author = {Absil, P.-A. and Mahony, R. and Sepulchre, R.},
  year = {2008},
  publisher = {Princeton University Press},
  address = {Princeton, N.J. ; Woodstock},
  isbn = {978-0-691-13298-3},
  lccn = {QA402.5 .A27 2008},
  keywords = {Algorithms,Mathematical optimization,Matrices},
  annotation = {OCLC: ocn174129993}
}

@article{afsari_2011_Riemannian$L_p$Center,
  title = {Riemannian \$\{\vphantom\}{{L}}\vphantom\{\}\_p\$ Center of Mass: {{Existence}}, Uniqueness, and Convexity},
  shorttitle = {Riemannian \${{L}}{\textasciicircum}\{p\}\$ Center of Mass},
  author = {Afsari, Bijan},
  year = {2011},
  month = feb,
  journal = {Proceedings of the American Mathematical Society},
  volume = {139},
  number = {02},
  pages = {655--655},
  issn = {0002-9939},
  doi = {10.1090/S0002-9939-2010-10541-5},
  urldate = {2021-09-27},
  langid = {english},
  file = {/Users/kyou/Library/CloudStorage/Dropbox-Personal/Bibliography/Afsari_2011_Riemannian $ L _p$ center of mass.pdf}
}

@article{agueh_2011_BarycentersWassersteinSpace,
  title = {Barycenters in the {{Wasserstein Space}}},
  author = {Agueh, Martial and Carlier, Guillaume},
  year = {2011},
  month = jan,
  journal = {SIAM Journal on Mathematical Analysis},
  volume = {43},
  number = {2},
  pages = {904--924},
  issn = {0036-1410, 1095-7154},
  doi = {10.1137/100805741},
  urldate = {2021-11-18},
  langid = {english},
  file = {/Users/kyou/Library/CloudStorage/Dropbox-Personal/Bibliography/Agueh_2011_Barycenters in the Wasserstein Space.pdf}
}

@book{bhattacharya_2012_NonparametricInferenceManifolds,
  title = {Nonparametric {{Inference}} on {{Manifolds}}: {{With Applications}} to {{Shape Spaces}}},
  shorttitle = {Nonparametric {{Inference}} on {{Manifolds}}},
  author = {Bhattacharya, Abhishek and Bhattacharya, Rabi},
  year = {2012},
  publisher = {Cambridge University Press},
  address = {Cambridge},
  doi = {10.1017/CBO9781139094764},
  urldate = {2021-06-11},
  isbn = {978-1-139-09476-4}
}

@book{boumal_2023_IntroductionOptimizationSmooth,
  title = {An {{Introduction}} to {{Optimization}} on {{Smooth Manifolds}}},
  author = {Boumal, Nicolas},
  year = {2023},
  month = mar,
  edition = {1},
  publisher = {Cambridge University Press},
  doi = {10.1017/9781009166164},
  urldate = {2024-06-07},
  abstract = {Optimization on Riemannian manifolds-the result of smooth geometry and optimization merging into one elegant modern framework-spans many areas of science and engineering, including machine learning, computer vision, signal processing, dynamical systems and scientific computing. This text introduces the differential geometry and Riemannian geometry concepts that will help students and researchers in applied mathematics, computer science and engineering gain a firm mathematical grounding to use these tools confidently in their research. Its charts-last approach will prove more intuitive from an optimizer's viewpoint, and all definitions and theorems are motivated to build time-tested optimization algorithms. Starting from first principles, the text goes on to cover current research on topics including worst-case complexity and geodesic convexity. Readers will appreciate the tricks of the trade for conducting research and for numerical implementations sprinkled throughout the book.},
  copyright = {https://www.cambridge.org/core/terms},
  isbn = {978-1-00-916616-4 978-1-00-916617-1 978-1-00-916615-7},
  file = {/Users/kyou/Library/CloudStorage/Dropbox-Personal/Bibliography/Boumal_2023_An Introduction to Optimization on Smooth Manifolds.pdf}
}

@article{carlsson_2004_PersistenceBarcodesShapes,
  title = {Persistence {{Barcodes}} for {{Shapes}}},
  author = {Carlsson, Gunnar and Zomorodian, Afra and Collins, Anne and Guibas, Leonidas},
  year = {2004},
  journal = {Symposium on Geometry Processing},
  pages = {12 pages},
  publisher = {The Eurographics Association},
  issn = {1727-8384},
  doi = {10.2312/SGP/SGP04/127-138},
  urldate = {2022-08-28},
  abstract = {In this paper, we initiate a study of shape description and classification via the application of persistent homology to two tangential constructions on geometric objects. Our techniques combine the differentiating power of geometry with the classifying power of topology. The homology of our first construction, the tangent complex, can distinguish between topologically identical shapes with different "sharp" features, such as corners. To capture "soft" curvature-dependent features, we define a second complex, the filtered tangent complex, obtained by parametrizing a family of increasing subcomplexes of the tangent complex. Applying persistent homology, we obtain a shape descriptor, called a barcode, that is a finite union of intervals. We define a metric over the space of such intervals, arriving at a continuous invariant that reflects the geometric properties of shapes. We illustrate the power of our methods through a number of detailed studies of parametrized families of mathematical shapes.},
  isbn = {9783905673135},
  langid = {english},
  file = {/Users/kyou/Library/CloudStorage/Dropbox-Personal/Bibliography/Carlsson_2004_Persistence Barcodes for Shapes.pdf}
}

@article{chazal_2021_IntroductionTopologicalData,
  title = {An {{Introduction}} to {{Topological Data Analysis}}: {{Fundamental}} and {{Practical Aspects}} for {{Data Scientists}}},
  shorttitle = {An {{Introduction}} to {{Topological Data Analysis}}},
  author = {Chazal, Fr{\'e}d{\'e}ric and Michel, Bertrand},
  year = {2021},
  month = sep,
  journal = {Frontiers in Artificial Intelligence},
  volume = {4},
  pages = {667963},
  issn = {2624-8212},
  doi = {10.3389/frai.2021.667963},
  urldate = {2021-11-18},
  abstract = {With the recent explosion in the amount, the variety, and the dimensionality of available data, identifying, extracting, and exploiting their underlying structure has become a problem of fundamental importance for data analysis and statistical learning. Topological data analysis (               tda               ) is a recent and fast-growing field providing a set of new topological and geometric tools to infer relevant features for possibly complex data. It proposes new well-founded mathematical theories and computational tools that can be used independently or in combination with other data analysis and statistical learning techniques. This article is a brief introduction, through a few selected topics, to basic fundamental and practical aspects of               tda               for nonexperts.},
  file = {/Users/kyou/Library/CloudStorage/Dropbox-Personal/Bibliography/Chazal_2021_An Introduction to Topological Data Analysis.pdf}
}

@inproceedings{chen_2016_XGBoostScalableTree,
  title = {{{XGBoost}}: {{A Scalable Tree Boosting System}}},
  shorttitle = {{{XGBoost}}},
  booktitle = {Proceedings of the 22nd {{ACM SIGKDD International Conference}} on {{Knowledge Discovery}} and {{Data Mining}}},
  author = {Chen, Tianqi and Guestrin, Carlos},
  year = {2016},
  month = aug,
  pages = {785--794},
  publisher = {ACM},
  address = {San Francisco California USA},
  doi = {10.1145/2939672.2939785},
  urldate = {2025-04-15},
  isbn = {978-1-4503-4232-2},
  langid = {english},
  file = {/Users/kyou/Library/CloudStorage/Dropbox-Personal/Bibliography/chen_2016_xgboost.pdf}
}

@article{csiszar_1963_InformationstheoretischeUngleichungUnd,
  title = {Eine Informationstheoretische {{Ungleichung}} Und Ihre {{Anwendung}} Auf Den Beweis Der Ergodizitat von Markoffschen Ketten},
  author = {Csisz{\'a}r, Imre},
  year = {1963},
  volume = {8},
  pages = {85}
}

@misc{davidson_2022_HypersphericalVariationalAutoEncoders,
  title = {Hyperspherical {{Variational Auto-Encoders}}},
  author = {Davidson, Tim R. and Falorsi, Luca and Cao, Nicola De and Kipf, Thomas and Tomczak, Jakub M.},
  year = {2022},
  month = sep,
  number = {arXiv:1804.00891},
  eprint = {1804.00891},
  primaryclass = {stat},
  publisher = {arXiv},
  doi = {10.48550/arXiv.1804.00891},
  urldate = {2025-04-18},
  abstract = {The Variational Auto-Encoder (VAE) is one of the most used unsupervised machine learning models. But although the default choice of a Gaussian distribution for both the prior and posterior represents a mathematically convenient distribution often leading to competitive results, we show that this parameterization fails to model data with a latent hyperspherical structure. To address this issue we propose using a von Mises-Fisher (vMF) distribution instead, leading to a hyperspherical latent space. Through a series of experiments we show how such a hyperspherical VAE, or \${\textbackslash}mathcal\{S\}\$-VAE, is more suitable for capturing data with a hyperspherical latent structure, while outperforming a normal, \${\textbackslash}mathcal\{N\}\$-VAE, in low dimensions on other data types. Code at http://github.com/nicola-decao/s-vae-tf and https://github.com/nicola-decao/s-vae-pytorch},
  archiveprefix = {arXiv},
  keywords = {Computer Science - Machine Learning,Statistics - Machine Learning},
  file = {/Users/kyou/Library/CloudStorage/Dropbox-Personal/Bibliography/davidson_2022_hyperspherical variational auto-encoders.pdf;/Users/kyou/Zotero/storage/X5BDWV9L/1804.html}
}

@article{dhillon_2001_ConceptDecompositionsLarge,
  title = {Concept {{Decompositions}} for {{Large Sparse Text Data Using Clustering}}},
  author = {Dhillon, Inderjit S. and Modha, Dharmendra S.},
  year = {2001},
  journal = {Machine Learning},
  volume = {42},
  number = {1/2},
  pages = {143--175},
  issn = {08856125},
  doi = {10.1023/A:1007612920971},
  urldate = {2022-02-23},
  file = {/Users/kyou/Library/CloudStorage/Dropbox-Personal/Bibliography/Dhillon_2001_Concept Decompositions for Large Sparse Text Data Using Clustering.pdf}
}

@misc{diethe_2015_NoteKullbackLeiblerDivergence,
  title = {A {{Note}} on the {{Kullback-Leibler Divergence}} for the von {{Mises-Fisher}} Distribution},
  author = {Diethe, Tom},
  year = {2015},
  month = feb,
  number = {arXiv:1502.07104},
  eprint = {1502.07104},
  primaryclass = {stat},
  publisher = {arXiv},
  doi = {10.48550/arXiv.1502.07104},
  urldate = {2025-02-04},
  abstract = {We present a derivation of the Kullback Leibler (KL)-Divergence (also known as Relative Entropy) for the von Mises Fisher (VMF) Distribution in \$d\$-dimensions.},
  archiveprefix = {arXiv},
  keywords = {Statistics - Machine Learning},
  file = {/Users/kyou/Zotero/storage/RKNAHCQE/Diethe - 2015 - A Note on the Kullback-Leibler Divergence for the .pdf;/Users/kyou/Zotero/storage/58GANLEL/1502.html}
}

@misc{dosovitskiy_2021_ImageWorth16x16,
  title = {An {{Image}} Is {{Worth}} 16x16 {{Words}}: {{Transformers}} for {{Image Recognition}} at {{Scale}}},
  shorttitle = {An {{Image}} Is {{Worth}} 16x16 {{Words}}},
  author = {Dosovitskiy, Alexey and Beyer, Lucas and Kolesnikov, Alexander and Weissenborn, Dirk and Zhai, Xiaohua and Unterthiner, Thomas and Dehghani, Mostafa and Minderer, Matthias and Heigold, Georg and Gelly, Sylvain and Uszkoreit, Jakob and Houlsby, Neil},
  year = {2021},
  month = jun,
  number = {arXiv:2010.11929},
  eprint = {2010.11929},
  primaryclass = {cs},
  publisher = {arXiv},
  doi = {10.48550/arXiv.2010.11929},
  urldate = {2025-04-18},
  abstract = {While the Transformer architecture has become the de-facto standard for natural language processing tasks, its applications to computer vision remain limited. In vision, attention is either applied in conjunction with convolutional networks, or used to replace certain components of convolutional networks while keeping their overall structure in place. We show that this reliance on CNNs is not necessary and a pure transformer applied directly to sequences of image patches can perform very well on image classification tasks. When pre-trained on large amounts of data and transferred to multiple mid-sized or small image recognition benchmarks (ImageNet, CIFAR-100, VTAB, etc.), Vision Transformer (ViT) attains excellent results compared to state-of-the-art convolutional networks while requiring substantially fewer computational resources to train.},
  archiveprefix = {arXiv},
  keywords = {Computer Science - Artificial Intelligence,Computer Science - Computer Vision and Pattern Recognition,Computer Science - Machine Learning},
  file = {/Users/kyou/Library/CloudStorage/Dropbox-Personal/Bibliography/dosovitskiy_2021_an image is worth 16x16 words.pdf;/Users/kyou/Zotero/storage/TXVUUKRL/2010.html}
}

@article{fisher_1953_DispersionSphere,
  title = {Dispersion on a {{Sphere}}},
  author = {Fisher, R.},
  year = {1953},
  month = may,
  journal = {Proceedings of the Royal Society A: Mathematical, Physical and Engineering Sciences},
  volume = {217},
  number = {1130},
  pages = {295--305},
  issn = {1364-5021, 1471-2946},
  doi = {10.1098/rspa.1953.0064},
  urldate = {2022-08-16},
  langid = {english},
  file = {/Users/kyou/Library/CloudStorage/Dropbox-Personal/Bibliography/Fisher_1953_Dispersion on a Sphere.pdf}
}

@inproceedings{gopal_2014_MisesfisherClusteringModels,
  title = {Von Mises-Fisher Clustering Models},
  booktitle = {Proceedings of the 31st International Conference on Machine Learning},
  author = {Gopal, Siddharth and Yang, Yiming},
  editor = {Xing, Eric P. and Jebara, Tony},
  year = {2014},
  series = {Proceedings of Machine Learning Research},
  volume = {32},
  pages = {154--162},
  publisher = {PMLR},
  address = {Bejing, China},
  pdf = {http://proceedings.mlr.press/v32/gopal14.pdf},
  file = {/Users/kyou/Library/CloudStorage/Dropbox-Personal/Bibliography/Gopal_2014_Von mises-fisher clustering models.pdf}
}

@article{gower_1969_MinimumSpanningTrees,
  title = {Minimum {{Spanning Trees}} and {{Single Linkage Cluster Analysis}}},
  author = {Gower, J. C. and Ross, G. J. S.},
  year = {1969},
  journal = {Applied Statistics},
  volume = {18},
  number = {1},
  eprint = {10.2307/2346439},
  eprinttype = {jstor},
  pages = {54},
  issn = {00359254},
  doi = {10.2307/2346439},
  urldate = {2025-04-18}
}

@article{harris_1954_DistributionalStructure,
  title = {Distributional {{Structure}}},
  author = {Harris, Zellig S.},
  year = {1954},
  month = aug,
  journal = {\emph{WORD}},
  volume = {10},
  number = {2-3},
  pages = {146--162},
  issn = {0043-7956, 2373-5112},
  doi = {10.1080/00437956.1954.11659520},
  urldate = {2025-04-18},
  langid = {english}
}

@inproceedings{hauberg_2018_DirectionalStatisticsSpherical,
  title = {Directional {{Statistics}} with the {{Spherical Normal Distribution}}},
  booktitle = {2018 21st {{International Conference}} on {{Information Fusion}} ({{FUSION}})},
  author = {Hauberg, Soren},
  year = {2018},
  month = jul,
  pages = {704--711},
  publisher = {IEEE},
  address = {Cambridge, United Kingdom},
  doi = {10.23919/ICIF.2018.8455242},
  urldate = {2021-11-18},
  isbn = {978-0-9964527-6-2},
  file = {/Users/kyou/Library/CloudStorage/Dropbox-Personal/Bibliography/Hauberg_2018_Directional Statistics with the Spherical Normal Distribution.pdf}
}

@book{huber_1981_RobustStatistics,
  title = {Robust Statistics},
  author = {Huber, Peter J.},
  year = {1981},
  series = {Wiley Series in Probability and Mathematical Statistics},
  publisher = {Wiley},
  address = {New York},
  isbn = {978-0-471-41805-4},
  lccn = {QA276 .H785},
  keywords = {Robust statistics}
}

@article{johnson_1967_HierarchicalClusteringSchemes,
  title = {Hierarchical {{Clustering Schemes}}},
  author = {Johnson, Stephen C.},
  year = {1967},
  month = sep,
  journal = {Psychometrika},
  volume = {32},
  number = {3},
  pages = {241--254},
  issn = {0033-3123, 1860-0980},
  doi = {10.1007/BF02289588},
  urldate = {2025-04-18},
  abstract = {Techniques for partitioning objects into optimally homogeneous groups on the basis of empirical measures of similarity among those objects have received increasing attention in several different fields. This paper develops a useful correspondence between any hierarchical system of such clusters, and a particular type of distance measure. The correspondence gives rise to two methods of clustering that are computationally rapid and invariant under monotonic transformations of the data. In an explicitly defined sense, one method forms clusters that are optimally ``connected,'' while the other forms clusters that are optimally ``compact.''},
  copyright = {https://www.cambridge.org/core/terms},
  langid = {english}
}

@incollection{kaufman_1990_PartitioningMedoidsProgram,
  title = {Partitioning {{Around Medoids}} ({{Program PAM}})},
  booktitle = {Wiley {{Series}} in {{Probability}} and {{Statistics}}},
  author = {Kaufman, Leonard and Rousseeuw, Peter J.},
  year = {1990},
  month = mar,
  pages = {68--125},
  publisher = {John Wiley \& Sons, Inc.},
  address = {Hoboken, NJ, USA},
  doi = {10.1002/9780470316801.ch2},
  urldate = {2021-10-02},
  isbn = {978-0-470-31680-1 978-0-471-87876-6},
  langid = {english}
}

@article{kendall_1990_ProbabilityConvexityHarmonic,
  title = {Probability, {{Convexity}}, and {{Harmonic Maps}} with {{Small Image I}}: {{Uniqueness}} and {{Fine Existence}}},
  shorttitle = {Probability, {{Convexity}}, and {{Harmonic Maps}} with {{Small Image I}}},
  author = {Kendall, Wilfrid S.},
  year = {1990},
  month = sep,
  journal = {Proceedings of the London Mathematical Society},
  volume = {s3-61},
  number = {2},
  pages = {371--406},
  issn = {00246115},
  doi = {10.1112/plms/s3-61.2.371},
  urldate = {2021-09-29},
  langid = {english}
}

@misc{kitagawa_2022_MisesFisherDistributionsTheir,
  title = {Von {{Mises-Fisher}} Distributions and Their Statistical Divergence},
  author = {Kitagawa, Toru and Rowley, Jeff},
  year = {2022},
  month = nov,
  number = {arXiv:2202.05192},
  eprint = {2202.05192},
  primaryclass = {econ},
  publisher = {arXiv},
  doi = {10.48550/arXiv.2202.05192},
  urldate = {2025-04-18},
  abstract = {The von Mises-Fisher family is a parametric family of distributions on the surface of the unit ball, summarised by a concentration parameter and a mean direction. As a quasi-Bayesian prior, the von Mises-Fisher distribution is a convenient and parsimonious choice when parameter spaces are isomorphic to the hypersphere (e.g., maximum score estimation in semi-parametric discrete choice, estimation of single-index treatment assignment rules via empirical welfare maximisation, under-identifying linear simultaneous equation models). Despite a long history of application, measures of statistical divergence have not been analytically characterised for von Mises-Fisher distributions. This paper provides analytical expressions for the \$f\$-divergence of a von Mises-Fisher distribution from another, distinct, von Mises-Fisher distribution in \${\textbackslash}mathbb\{R\}{\textasciicircum}p\$ and the uniform distribution over the hypersphere. This paper also collect several other results pertaining to the von Mises-Fisher family of distributions, and characterises the limiting behaviour of the measures of divergence that we consider.},
  archiveprefix = {arXiv},
  keywords = {Economics - Econometrics},
  file = {/Users/kyou/Library/CloudStorage/Dropbox-Personal/Bibliography/kitagawa_2022_von mises-fisher distributions and their statistical divergence.pdf;/Users/kyou/Zotero/storage/UY3CZB7F/2202.html}
}

@article{koopman_1936_DistributionsAdmittingSufficient,
  title = {On Distributions Admitting a Sufficient Statistic},
  author = {Koopman, B. O.},
  year = {1936},
  journal = {Transactions of the American Mathematical Society},
  volume = {39},
  number = {3},
  pages = {399--409},
  issn = {0002-9947, 1088-6850},
  doi = {10.1090/S0002-9947-1936-1501854-3},
  urldate = {2025-04-18},
  langid = {english},
  file = {/Users/kyou/Library/CloudStorage/Dropbox-Personal/Bibliography/koopman_1936_on distributions admitting a sufficient statistic.pdf}
}

@techreport{krizhevsky_2009_LearningMultipleLayers,
  type = {Technical {{Report}}},
  title = {Learning Multiple Layers of Features from Tiny Images},
  author = {Krizhevsky, Alex and Hinton, Geoffrey},
  year = {2009},
  institution = {University of Toronto}
}

@article{kullback_1951_InformationSufficiency,
  title = {On {{Information}} and {{Sufficiency}}},
  author = {Kullback, S. and Leibler, R. A.},
  year = {1951},
  month = mar,
  journal = {The Annals of Mathematical Statistics},
  volume = {22},
  number = {1},
  pages = {79--86},
  issn = {0003-4851},
  doi = {10.1214/aoms/1177729694},
  urldate = {2022-06-22},
  langid = {english},
  file = {/Users/kyou/Library/CloudStorage/Dropbox-Personal/Bibliography/Kullback_1951_On Information and Sufficiency.pdf}
}

@inproceedings{le_2014_DistributedRepresentationsSentences,
  title = {Distributed {{Representations}} of {{Sentences}} and {{Documents}}},
  booktitle = {Proceedings of the 31st {{International Conference}} on {{Machine Learning}}},
  author = {Le, Quoc and Mikolov, Tomas},
  editor = {Xing, Eric P. and Jebara, Tony},
  year = {2014},
  series = {Proceedings of {{Machine Learning Research}}},
  volume = {32},
  pages = {1188--1196},
  publisher = {PMLR},
  address = {Bejing, China},
  abstract = {Many machine learning algorithms require the input to be represented as a fixed length feature vector. When it comes to texts, one of the most common representations is bag-of-words. Despite their popularity, bag-of-words models have two major weaknesses: they lose the ordering of the words and they also ignore semantics of the words. For example, "powerful," "strong" and "Paris" are equally distant. In this paper, we propose an unsupervised algorithm that learns vector representations of sentences and text documents. This algorithm represents each document by a dense vector which is trained to predict words in the document. Its construction gives our algorithm the potential to overcome the weaknesses of bag-of-words models. Empirical results show that our technique outperforms bag-of-words models as well as other techniques for text representations. Finally, we achieve new state-of-the-art results on several text classification and sentiment analysis tasks.}
}

@book{lee_1997_RiemannianManifoldsIntroduction,
  title = {Riemannian Manifolds: An Introduction to Curvature},
  shorttitle = {Riemannian Manifolds},
  author = {Lee, John M.},
  year = {1997},
  series = {Graduate Texts in Mathematics},
  number = {176},
  publisher = {Springer},
  address = {New York},
  isbn = {978-0-387-98271-7 978-0-387-98322-6},
  lccn = {QA649 .L397 1997},
  keywords = {Riemannian manifolds}
}

@book{ley_2017_ModernDirectionalStatistics,
  title = {Modern {{Directional Statistics}}},
  author = {Ley, Christophe and Verdebout, Thomas},
  year = {2017},
  month = aug,
  edition = {1},
  publisher = {{Chapman and Hall/CRC}},
  doi = {10.1201/9781315119472},
  urldate = {2021-10-29},
  isbn = {978-1-315-11947-2},
  langid = {english}
}

@inproceedings{macqueen_1967_MethodsClassificationAnalysis,
  title = {Some {{Methods}} for {{Classification}} and {{Analysis}} of {{Multivariate Observations}}},
  booktitle = {Proc. of the Fifth Berkeley Symposium on Mathematical Statistics and Probability},
  author = {MacQueen, J. B.},
  editor = {Cam, L. M. Le and Neyman, J.},
  year = {1967},
  volume = {1},
  pages = {281--297},
  publisher = {University of California Press},
  keywords = {kmeans clustering},
  file = {/Users/kyou/Library/CloudStorage/Dropbox-Personal/Bibliography/MacQueen_1967_Some Methods for Classification and Analysis of Multivariate Observations.pdf}
}

@book{mardia_2000_DirectionalStatistics,
  title = {Directional Statistics},
  author = {Mardia, K. V. and Jupp, Peter E.},
  year = {2000},
  series = {Wiley Series in Probability and Statistics},
  publisher = {J. Wiley},
  address = {Chichester ; New York},
  isbn = {978-0-471-95333-3},
  lccn = {QA276 .J864 2000},
  keywords = {Distribution (Probability theory),Mathematical statistics,Sampling (Statistics)},
  file = {/Users/kyou/Library/CloudStorage/Dropbox-Personal/Bibliography/Mardia_2000_Directional statistics.pdf}
}

@misc{mcinnes_2020_UMAPUniformManifold,
  title = {{{UMAP}}: {{Uniform Manifold Approximation}} and {{Projection}} for {{Dimension Reduction}}},
  shorttitle = {{{UMAP}}},
  author = {McInnes, Leland and Healy, John and Melville, James},
  year = {2020},
  month = sep,
  number = {arXiv:1802.03426},
  eprint = {1802.03426},
  primaryclass = {stat},
  publisher = {arXiv},
  doi = {10.48550/arXiv.1802.03426},
  urldate = {2025-04-18},
  abstract = {UMAP (Uniform Manifold Approximation and Projection) is a novel manifold learning technique for dimension reduction. UMAP is constructed from a theoretical framework based in Riemannian geometry and algebraic topology. The result is a practical scalable algorithm that applies to real world data. The UMAP algorithm is competitive with t-SNE for visualization quality, and arguably preserves more of the global structure with superior run time performance. Furthermore, UMAP has no computational restrictions on embedding dimension, making it viable as a general purpose dimension reduction technique for machine learning.},
  archiveprefix = {arXiv},
  keywords = {Computer Science - Computational Geometry,Computer Science - Machine Learning,Statistics - Machine Learning},
  file = {/Users/kyou/Library/CloudStorage/Dropbox-Personal/Bibliography/mcinnes_2020_umap.pdf;/Users/kyou/Zotero/storage/26LXGRG3/1802.html}
}

@article{metropolis_1949_MonteCarloMethod,
  title = {The {{Monte Carlo Method}}},
  author = {Metropolis, Nicholas and Ulam, S.},
  year = {1949},
  month = sep,
  journal = {Journal of the American Statistical Association},
  volume = {44},
  number = {247},
  pages = {335--341},
  issn = {0162-1459, 1537-274X},
  doi = {10.1080/01621459.1949.10483310},
  urldate = {2025-04-18},
  langid = {english}
}

@misc{mikolov_2013_EfficientEstimationWord,
  title = {Efficient {{Estimation}} of {{Word Representations}} in {{Vector Space}}},
  author = {Mikolov, Tomas and Chen, Kai and Corrado, Greg and Dean, Jeffrey},
  year = {2013},
  month = sep,
  number = {arXiv:1301.3781},
  eprint = {1301.3781},
  primaryclass = {cs},
  publisher = {arXiv},
  doi = {10.48550/arXiv.1301.3781},
  urldate = {2025-04-18},
  abstract = {We propose two novel model architectures for computing continuous vector representations of words from very large data sets. The quality of these representations is measured in a word similarity task, and the results are compared to the previously best performing techniques based on different types of neural networks. We observe large improvements in accuracy at much lower computational cost, i.e. it takes less than a day to learn high quality word vectors from a 1.6 billion words data set. Furthermore, we show that these vectors provide state-of-the-art performance on our test set for measuring syntactic and semantic word similarities.},
  archiveprefix = {arXiv},
  keywords = {Computer Science - Computation and Language},
  file = {/Users/kyou/Library/CloudStorage/Dropbox-Personal/Bibliography/mikolov_2013_efficient estimation of word representations in vector space.pdf;/Users/kyou/Zotero/storage/738XJCZI/1301.html}
}

@book{panaretos_2020_InvitationStatisticsWasserstein,
  title = {An {{Invitation}} to {{Statistics}} in {{Wasserstein Space}}},
  author = {Panaretos, Victor M. and Zemel, Yoav},
  year = {2020},
  series = {{{SpringerBriefs}} in {{Probability}} and {{Mathematical Statistics}}},
  publisher = {Springer International Publishing},
  address = {Cham},
  doi = {10.1007/978-3-030-38438-8},
  urldate = {2021-11-06},
  isbn = {978-3-030-38437-1 978-3-030-38438-8},
  langid = {english},
  file = {/Users/kyou/Library/CloudStorage/Dropbox-Personal/Bibliography/Panaretos_2020_An Invitation to Statistics in Wasserstein Space.pdf}
}

@article{patil_2023_SurveyTextRepresentation,
  title = {A {{Survey}} of {{Text Representation}} and {{Embedding Techniques}} in {{NLP}}},
  author = {Patil, Rajvardhan and Boit, Sorio and Gudivada, Venkat and Nandigam, Jagadeesh},
  year = {2023},
  journal = {IEEE Access},
  volume = {11},
  pages = {36120--36146},
  issn = {2169-3536},
  doi = {10.1109/ACCESS.2023.3266377},
  urldate = {2025-04-18},
  copyright = {https://creativecommons.org/licenses/by-nc-nd/4.0/},
  file = {/Users/kyou/Library/CloudStorage/Dropbox-Personal/Bibliography/patil_2023_a survey of text representation and embedding techniques in nlp.pdf}
}

@article{pitman_1937_SignificanceTestsWhich,
  title = {Significance {{Tests Which May}} Be {{Applied}} to {{Samples From}} Any {{Populations}}},
  author = {Pitman, E. J. G.},
  year = {1937},
  journal = {Supplement to the Journal of the Royal Statistical Society},
  volume = {4},
  number = {1},
  eprint = {10.2307/2984124},
  eprinttype = {jstor},
  pages = {119},
  issn = {14666162},
  doi = {10.2307/2984124},
  urldate = {2021-10-02},
  file = {/Users/kyou/Library/CloudStorage/Dropbox-Personal/Bibliography/Pitman_1937_Significance Tests Which May be Applied to Samples From any Populations.pdf}
}

@inproceedings{reimers_2019_SentenceBERTSentenceEmbeddings,
  title = {Sentence-{{BERT}}: {{Sentence Embeddings}} Using {{Siamese BERT-Networks}}},
  booktitle = {Proceedings of the 2019 {{Conference}} on {{Empirical Methods}} in {{Natural Language Processing}} and the 9th {{International Joint Conference}} on {{Natural Language Processing}} ({{EMNLP-IJCNLP}})},
  author = {Reimers, Nils and Gurevych, Iryna},
  editor = {Inui, Kentaro and Jiang, Jing and Ng, Vincent and Wan, Xiaojun},
  year = {2019},
  month = nov,
  pages = {3982--3992},
  publisher = {Association for Computational Linguistics},
  address = {Hong Kong, China},
  doi = {10.18653/v1/D19-1410},
  abstract = {BERT (Devlin et al., 2018) and RoBERTa (Liu et al., 2019) has set a new state-of-the-art performance on sentence-pair regression tasks like semantic textual similarity (STS). However, it requires that both sentences are fed into the network, which causes a massive computational overhead: Finding the most similar pair in a collection of 10,000 sentences requires about 50 million inference computations ({\textasciitilde}65 hours) with BERT. The construction of BERT makes it unsuitable for semantic similarity search as well as for unsupervised tasks like clustering. In this publication, we present Sentence-BERT (SBERT), a modification of the pretrained BERT network that use siamese and triplet network structures to derive semantically meaningful sentence embeddings that can be compared using cosine-similarity. This reduces the effort for finding the most similar pair from 65 hours with BERT / RoBERTa to about 5 seconds with SBERT, while maintaining the accuracy from BERT. We evaluate SBERT and SRoBERTa on common STS tasks and transfer learning tasks, where it outperforms other state-of-the-art sentence embeddings methods.}
}

@inproceedings{schopf_2022_EvaluatingUnsupervisedText,
  title = {Evaluating {{Unsupervised Text Classification}}: {{Zero-shot}} and {{Similarity-based Approaches}}},
  shorttitle = {Evaluating {{Unsupervised Text Classification}}},
  booktitle = {Proceedings of the 2022 6th {{International Conference}} on {{Natural Language Processing}} and {{Information Retrieval}}},
  author = {Schopf, Tim and Braun, Daniel and Matthes, Florian},
  year = {2022},
  month = dec,
  pages = {6--15},
  publisher = {ACM},
  address = {Bangkok Thailand},
  doi = {10.1145/3582768.3582795},
  urldate = {2025-04-18},
  isbn = {978-1-4503-9762-9},
  langid = {english},
  file = {/Users/kyou/Library/CloudStorage/Dropbox-Personal/Bibliography/schopf_2022_evaluating unsupervised text classification.pdf}
}

@article{schwarz_1978_EstimatingDimensionModel,
  title = {Estimating the {{Dimension}} of a {{Model}}},
  author = {Schwarz, Gideon},
  year = {1978},
  month = mar,
  journal = {The Annals of Statistics},
  volume = {6},
  number = {2},
  issn = {0090-5364},
  doi = {10.1214/aos/1176344136},
  urldate = {2025-04-18},
  file = {/Users/kyou/Library/CloudStorage/Dropbox-Personal/Bibliography/schwarz_1978_estimating the dimension of a model.pdf}
}

@manual{selivanov_2023_Text2vecModernText,
  type = {Manual},
  title = {Text2vec: {{Modern Text Mining Framework}} for {{R}}},
  author = {Selivanov, Dmitriy and Bickel, Manuel and Wang, Qing},
  year = {2023}
}

@inproceedings{song_2020_MPNetMaskedPermuted,
  title = {{{MPNet}}: Masked and Permuted Pre-Training for Language Understanding},
  booktitle = {Proceedings of the 34th {{International Conference}} on {{Neural Information Processing Systems}}},
  author = {Song, Kaitao and Tan, Xu and Qin, Tao and Lu, Jianfeng and Liu, Tie-Yan},
  year = {2020},
  series = {{{NIPS}} '20},
  publisher = {Curran Associates Inc.},
  address = {Red Hook, NY, USA},
  abstract = {BERT adopts masked language modeling (MLM) for pre-training and is one of the most successful pre-training models. Since BERT neglects dependency among predicted tokens, XLNet introduces permuted language modeling (PLM) for pre-training to address this problem. However, XLNet does not leverage the full position information of a sentence and thus suffers from position discrepancy between pre-training and fine-tuning. In this paper, we propose MPNet, a novel pre-training method that inherits the advantages of BERT and XLNet and avoids their limitations. MPNet leverages the dependency among predicted tokens through permuted language modeling (vs. MLM in BERT), and takes auxiliary position information as input to make the model see a full sentence and thus reducing the position discrepancy (vs. PLM in XLNet). We pre-train MPNet on a large-scale dataset (over 160GB text corpora) and fine-tune on a variety of down-streaming tasks (GLUE, SQuAD, etc). Experimental results show that MPNet outperforms MLM and PLM by a large margin, and achieves better results on these tasks compared with previous state-of-the-art pre-trained methods (e.g., BERT, XLNet, RoBERTa) under the same model setting.},
  articleno = {1414},
  isbn = {978-1-71382-954-6}
}

@article{takatsu_2011_WassersteinGeometryGaussian,
  title = {Wasserstein Geometry of {{Gaussian}} Measures},
  author = {Takatsu, Asuka},
  year = {2011},
  journal = {Osaka Journal of Mathematics},
  volume = {48},
  number = {4},
  pages = {1005--1026},
  publisher = {{Osaka University and Osaka Metropolitan University, Departments of Mathematics}},
  file = {/Users/kyou/Library/CloudStorage/Dropbox-Personal/Bibliography/Takatsu_2011_Wasserstein geometry of Gaussian measures.pdf}
}

@article{torgerson_1952_MultidimensionalScalingTheory,
  title = {Multidimensional Scaling: {{I}}. {{Theory}} and Method},
  shorttitle = {Multidimensional Scaling},
  author = {Torgerson, Warren S.},
  year = {1952},
  month = dec,
  journal = {Psychometrika},
  volume = {17},
  number = {4},
  pages = {401--419},
  issn = {0033-3123, 1860-0980},
  doi = {10.1007/BF02288916},
  urldate = {2021-11-29},
  langid = {english},
  file = {/Users/kyou/Library/CloudStorage/Dropbox-Personal/Bibliography/Torgerson_1952_Multidimensional scaling.pdf}
}

@article{vandermaaten_2008_VisualizingDataUsing,
  title = {Visualizing Data Using T-{{SNE}}},
  author = {{van der Maaten}, Laurens and Hinton, Geoffrey},
  year = {2008},
  journal = {Journal of Machine Learning Research},
  volume = {9},
  number = {86},
  pages = {2579--2605},
  file = {/Users/kyou/Library/CloudStorage/Dropbox-Personal/Bibliography/van der Maaten_2008_Visualizing data using t-SNE.pdf}
}

@book{villani_2003_TopicsOptimalTransportation,
  title = {Topics in {{Optimal Transportation}}},
  author = {Villani, C{\'e}dric},
  year = {2003},
  series = {Graduate {{Studies}} in {{Mathematics}}},
  volume = {58},
  publisher = {American Mathematical Society},
  address = {S.l.},
  isbn = {978-0-8218-3312-4},
  langid = {english},
  annotation = {OCLC: 1282601012}
}

@book{villani_2009_OptimalTransportOld,
  title = {Optimal Transport: Old and New},
  shorttitle = {Optimal Transport},
  author = {Villani, C{\'e}dric},
  year = {2009},
  series = {Grundlehren Der Mathematischen {{Wissenschaften}}},
  number = {338},
  publisher = {Springer},
  address = {Berlin},
  isbn = {978-3-540-71049-3},
  lccn = {QA402.5 .V538 2009},
  keywords = {Dynamics,Dynamique,Geometrie differentielle,Geometry Differential,Mathematical optimization,Optimisation mathematique,Probabilites,Probabilities,Problemes de transport (Programmation),Transportation problems (Programming)},
  annotation = {OCLC: ocn244421231}
}

@article{watamori_1996_StatisticalInferenceLangevin,
  title = {Statistical Inference of {{Langevin}} Distribution for Directional Data},
  author = {Watamori, Yoko},
  year = {1996},
  journal = {Hiroshima Mathematical Journal},
  volume = {26},
  number = {1},
  pages = {25--74},
  publisher = {Hiroshima University, Mathematics Program}
}

@article{watson_1984_TheoryConcentratedLangevin,
  title = {The Theory of Concentrated {{Langevin}} Distributions},
  author = {Watson, Geoffrey S},
  year = {1984},
  month = feb,
  journal = {Journal of Multivariate Analysis},
  volume = {14},
  number = {1},
  pages = {74--82},
  issn = {0047259X},
  doi = {10.1016/0047-259X(84)90047-2},
  urldate = {2025-04-18},
  copyright = {https://www.elsevier.com/tdm/userlicense/1.0/},
  langid = {english}
}

@manual{wijffels_2023_Word2vecDistributedRepresentations,
  type = {Manual},
  title = {Word2vec: {{Distributed Representations}} of {{Words}}},
  author = {Wijffels, Jan and Watanabe, Kohei},
  year = {2023}
}

@misc{xu_2018_SphericalLatentSpaces,
  title = {Spherical {{Latent Spaces}} for {{Stable Variational Autoencoders}}},
  author = {Xu, Jiacheng and Durrett, Greg},
  year = {2018},
  month = oct,
  number = {arXiv:1808.10805},
  eprint = {1808.10805},
  primaryclass = {cs},
  publisher = {arXiv},
  doi = {10.48550/arXiv.1808.10805},
  urldate = {2025-04-18},
  abstract = {A hallmark of variational autoencoders (VAEs) for text processing is their combination of powerful encoder-decoder models, such as LSTMs, with simple latent distributions, typically multivariate Gaussians. These models pose a difficult optimization problem: there is an especially bad local optimum where the variational posterior always equals the prior and the model does not use the latent variable at all, a kind of "collapse" which is encouraged by the KL divergence term of the objective. In this work, we experiment with another choice of latent distribution, namely the von Mises-Fisher (vMF) distribution, which places mass on the surface of the unit hypersphere. With this choice of prior and posterior, the KL divergence term now only depends on the variance of the vMF distribution, giving us the ability to treat it as a fixed hyperparameter. We show that doing so not only averts the KL collapse, but consistently gives better likelihoods than Gaussians across a range of modeling conditions, including recurrent language modeling and bag-of-words document modeling. An analysis of the properties of our vMF representations shows that they learn richer and more nuanced structures in their latent representations than their Gaussian counterparts.},
  archiveprefix = {arXiv},
  keywords = {Computer Science - Computation and Language},
  file = {/Users/kyou/Library/CloudStorage/Dropbox-Personal/Bibliography/xu_2018_spherical latent spaces for stable variational autoencoders.pdf;/Users/kyou/Zotero/storage/KHDLC3AM/1808.html}
}

@article{you_2022_ParameterEstimationModelbased,
  title = {Parameter Estimation and Model-Based Clustering with Spherical Normal Distribution on the Unit Hypersphere},
  author = {You, Kisung and Suh, Changhee},
  year = {2022},
  month = jul,
  journal = {Computational Statistics \& Data Analysis},
  volume = {171},
  pages = {107457},
  issn = {01679473},
  doi = {10.1016/j.csda.2022.107457},
  urldate = {2022-03-03},
  copyright = {All rights reserved},
  langid = {english},
  file = {/Users/kyou/Library/CloudStorage/Dropbox-Personal/Bibliography/You_2022_Parameter estimation and model-based clustering with spherical normal.pdf}
}

@article{you_2024_WassersteinMedianProbability,
  title = {On the {{Wasserstein Median}} of {{Probability Measures}}},
  author = {You, Kisung and Shung, Dennis and Giuffr{\`e}, Mauro},
  year = {2024},
  month = jul,
  journal = {Journal of Computational and Graphical Statistics},
  pages = {1--25},
  issn = {1061-8600, 1537-2715},
  doi = {10.1080/10618600.2024.2374580},
  urldate = {2024-07-10},
  copyright = {All rights reserved},
  langid = {english},
  file = {/Users/kyou/Library/CloudStorage/Dropbox-Personal/Bibliography/you_2024_on the wasserstein median of probability measures.pdf}
}

%% ---- SUPPLEMENTARY MATERIAL, IF NECESSARY ---- *
%\newpage 
%\section*{Supplementary Material}
%% ---- SUPPLEMENTARY MATERIAL, IF NECESSARY ---- *

\end{document}